\tikzset{node distance=3.5cm, 
every state/.style={ 
semithick,
fill=gray!10},
initial text={}, 
double distance=2pt, 
every edge/.style={ 
                draw,
->,>=stealth, 
auto,
semithick}}
\definecolor{gray}{cmyk}{0,0,0,0.8}
\title{Reward-Machine-Guided, Self-Paced Reinforcement Learning}
\author[1]{\href{mailto:<cevahir.koprulu@utexas.edu>?Subject=Your UAI 2023 paper}{Cevahir~Koprulu}{}}
\author[1]{Ufuk~Topcu}
\affil[1]{%
    The University of Texas at Austin\\
    USA
}
\begin{document}
\maketitle

\begin{abstract}
Self-paced reinforcement learning (RL) aims to improve the data efficiency of learning by automatically creating sequences, namely \emph{curricula}, of probability distributions over \emph{contexts}.
However, existing techniques for self-paced RL fail in long-horizon planning tasks that involve temporally extended behaviors. 
We hypothesize that taking advantage of prior knowledge about the underlying task structure can improve the effectiveness of self-paced RL.
We develop a self-paced RL algorithm guided by reward machines, i.e., a type of finite-state machine that encodes the underlying task structure. 
The algorithm integrates reward machines in \begin{enumerate*}[label=\arabic*)]
    \item the update of the policy and value functions obtained by any RL algorithm of choice, and 
    \item the update of the automated curriculum that generates context distributions.
\end{enumerate*}
Our empirical results evidence that the proposed algorithm achieves optimal behavior reliably even in cases in which existing baselines cannot make any meaningful progress. It also decreases the curriculum length and reduces the variance in the curriculum generation process by up to one-fourth and four orders of magnitude, respectively.
\end{abstract}


\newtheorem{definition}{Definition}
\newtheorem{lemma}{Lemma}
\newtheorem{theorem}{Theorem}
\newtheorem{corollary}{Corollary}
\newtheorem{assumption}[theorem]{Assumption}
\newenvironment{proofsketch}{%
\renewcommand{\proofname}{Proof Sketch}\proof}{\endproof}

\newcommand{\Reals}{\mathbb{R}}
\newcommand{\PositiveIntegers}{\mathbb{Z^+}}
\newcommand{\Expectation}{\mathbb{E}}
\newcommand{\KLDivergence}{D_{\text{KL}}}
\newcommand{\Naturals}{\mathbb{N}}
\newcommand{\RLAlgorithm}{\Psi}

\newcommand{\Discount}{\gamma}
\newcommand{\Policy}{\pi}
\newcommand{\CommonAction}{a}
\newcommand{\CommonState}{s}

\newcommand{\PointMassLabelDoorOne}{d1}
\newcommand{\PointMassLabelBox}{b}
\newcommand{\PointMassLabelDoorTwo}{d2}
\newcommand{\PointMassLabelGoal}{g}
\newcommand{\PointMassLabelWall}{w}

\newcommand{\CheetahFlagOne}{f_1}
\newcommand{\CheetahFlagTwo}{f_2}
\newcommand{\CheetahFlagThree}{f_3}

\newcommand{\SwimmerFlagOne}{f_1}
\newcommand{\SwimmerFlagTwo}{f_2}
\newcommand{\SwimmerFlagThree}{f_3}

\newcommand{\Lmdp}{\mathcal{M}}
\newcommand{\LmdpStates}{S}
\newcommand{\LmdpCommonState}{\CommonState}
\newcommand{\LmdpInit}{\phi}
\newcommand{\LmdpActions}{A}
\newcommand{\LmdpCommonAction}{\CommonAction}
\newcommand{\LmdpTransition}{p}
\newcommand{\LmdpRewardFunction}{R}
\newcommand{\LmdpReward}{r}
\newcommand{\LmdpRewardSequence}{\rho}
\newcommand{\LmdpDiscount}{\Discount}
\newcommand{\LmdpLabels}{\mathcal{P}}
\newcommand{\LmdpCommonLabel}{\ell}
\newcommand{\LmdpLabelingFunction}{L}
\newcommand{\LmdpLabelSequence}{\lambda}
\newcommand{\LmdpPolicy}{\Policy}

\newcommand{\RM}{\mathcal{R}}
\newcommand{\RMStates}{Q}
\newcommand{\RMCommonState}{\mathsf{q}}
\newcommand{\RMReward}{r}
\newcommand{\RMInit}{\RMCommonState_{I}}
\newcommand{\RMInputAlphabet}{2^{\LmdpLabels}}
\newcommand{\RMOutputAlphabet}{O}
\newcommand{\RMTransitionFunction}{\delta_{\RMCommonState}}
\newcommand{\RMOutputFunction}{\delta_{\LmdpReward}}
\newcommand{\RMCommonLabel}{\LmdpCommonLabel}
\newcommand{\LogicFormula}{\rho}

\newcommand{\context}{c}
\newcommand{\Cmdp}{\mathcal{\Bar{M}}}
\newcommand{\CmdpStates}{S}
\newcommand{\CmdpActions}{A}
\newcommand{\CmdpContextSpace}{\mathcal{C}}
\newcommand{\CmdpMapping}{\mathsf{M}}
\newcommand{\CmdpCommonState}{\CommonState}
\newcommand{\CmdpCommonAction}{\CommonAction}
\newcommand{\CmdpTransition}{p_{\context}}
\newcommand{\CmdpRewardFunction}{R_{\context}}
\newcommand{\CmdpInitialDistribution}{\phi_{\context}}
\newcommand{\CmdpPolicy}{\Policy}
\newcommand{\CmdpDiscount}{\Discount}
\newcommand{\CmdpDimensions}{D}
\newcommand{\CmdpReward}{r}

\newcommand{\PolicyParameter}{\omega}
\newcommand{\ValueFunction}{V_{\PolicyParameter}}
\newcommand{\TargetContextDistribution}{\varphi}

\newcommand{\KLCoefficient}{\alpha}
\newcommand{\ContextDistribution}{\varrho}
\newcommand{\ContextDistributionParameter}{\nu}
\newcommand{\RelativeEntropyBound}{\epsilon}
\newcommand{\SPRLTrajectorySet}{\mathcal{D}}
\newcommand{\EstimatedValueFunction}{\hat{V}_{\PolicyParameter}}
\newcommand{\KLPenaltyProportion}{\zeta}
\newcommand{\KLPenaltyOffset}{K_{\alpha}}
\newcommand{\NumberOfIterations}{K}
\newcommand{\NumberOfRollouts}{N}
\newcommand{\Trajectory}{\tau}
\newcommand{\ContextUpdateOffset}{K_{\text{OFFSET}}}
\newcommand{\NumberOfStepsBetweenUpdates}{n_{\text{STEP}}}
\newcommand{\STDLowerBound}{\sigma_{\text{LB}}}
\newcommand{\KLLowerBound}{D_{\text{KL}_{LB}}}
\newcommand{\SPRLInitMean}{\mu_{\text{INIT}}}
\newcommand{\SPRLInitVar}{\Sigma_{\text{INIT}}}
\newcommand{\SPRLTargetMean}{\mu_{\text{TARGET}}}
\newcommand{\SPRLTargetVar}{\Sigma_{\text{TARGET}}}

\newcommand{\LCmdp}{\Cmdp^{\LmdpLabelingFunction}}
\newcommand{\LCmdpStates}{\CmdpStates}
\newcommand{\LCmdpActions}{\CmdpActions}
\newcommand{\LCmdpContextSpace}{\CmdpContextSpace}
\newcommand{\LCmdpMapping}{\CmdpMapping^{\LmdpLabelingFunction}}
\newcommand{\LCmdpCommonState}{\CommonState}
\newcommand{\LCmdpCommonAction}{\CommonAction}
\newcommand{\LCmdpTransition}{\CmdpTransition}
\newcommand{\LCmdpRewardFunction}{\CmdpRewardFunction^{\LmdpLabelingFunction}}
\newcommand{\LCmdpInitialDistribution}{\CmdpInitialDistribution}
\newcommand{\LCmdpPolicy}{\Policy}
\newcommand{\LCmdpDiscount}{\Discount}
\newcommand{\LCmdpLabels}{\LmdpLabels}
\newcommand{\LCmdpLabelingFunction}{\LmdpLabelingFunction_{\context}}
\newcommand{\LCmdpDimensions}{\CmdpDimensions}
\newcommand{\LCmdpCommonLabel}{\LmdpCommonLabel}
\newcommand{\LCmdpReward}{\LmdpReward}
\newcommand{\LCmdpHistory}{h}
\newcommand{\LCmdpNumberOfDimensions}{\Gamma}

\newcommand{\Pmdp}{\Cmdp^{\LmdpLabelingFunction}_{\RM}}
\newcommand{\PmdpStates}{\bar{\CmdpStates}}
\newcommand{\PmdpActions}{\CmdpActions}
\newcommand{\PmdpContextSpace}{\CmdpContextSpace}
\newcommand{\PmdpMapping}{\bar{\CmdpMapping}^{\LmdpLabelingFunction}}
\newcommand{\PmdpCommonState}{\bar{\CommonState}}
\newcommand{\PmdpCommonAction}{\CommonAction}
\newcommand{\PmdpTransition}{\bar{p}_{\context}}
\newcommand{\PmdpRewardFunction}[1][\context]{\bar{R}^{\LmdpLabelingFunction}_{#1}}
\newcommand{\PmdpInitialDistribution}{\bar{\phi}_{\context}}
\newcommand{\PmdpPolicy}{\Policy}
\newcommand{\PmdpDiscount}{\Discount}
\newcommand{\PmdpLabels}{\LmdpLabels}
\newcommand{\PmdpLabelingFunction}{\LCmdpLabelingFunction}
\newcommand{\PmdpDimensions}{\LCmdpDimensions}
\newcommand{\PmdpCommonLabel}{\LmdpCommonLabel}
\newcommand{\PmdpReward}{\bar{\LCmdpReward}}
\newcommand{\PmdpTrajectory}{\bar{\Trajectory}}
\newcommand{\RMContextMapping}{\mathsf{F}}
\newcommand{\RMContextMappingCommonOutput}{\mathsf{f}}

\newcommand{\RMMDPContextSet}{\mathcal{G}}
\newcommand{\RMMDPContextMapping}{\mathsf{H}_{min}}
\newcommand{\RMMDPContextSetALL}{\Gamma}

\newcommand{\GoalGANdNoise}{\delta_{\text{NOISE}}}
\newcommand{\GoalGANnRollout}{n_{\text{ROLLOUT}}^{\text{GG}}}
\newcommand{\GoalGANpSuccess}{p_{\text{SUCCESS}}}

\newcommand{\ALPGMMpRandom}{p_{\text{RAND}}}
\newcommand{\ALPGMMnRollout}{n_{\text{ROLLOUT}}^{\text{AG}}}
\newcommand{\ALPGMMsBuffer}{s_{\text{BUFFER}}}


\section{Introduction}

\begin{figure}[t]
\centering
\includegraphics[width=0.48\textwidth]{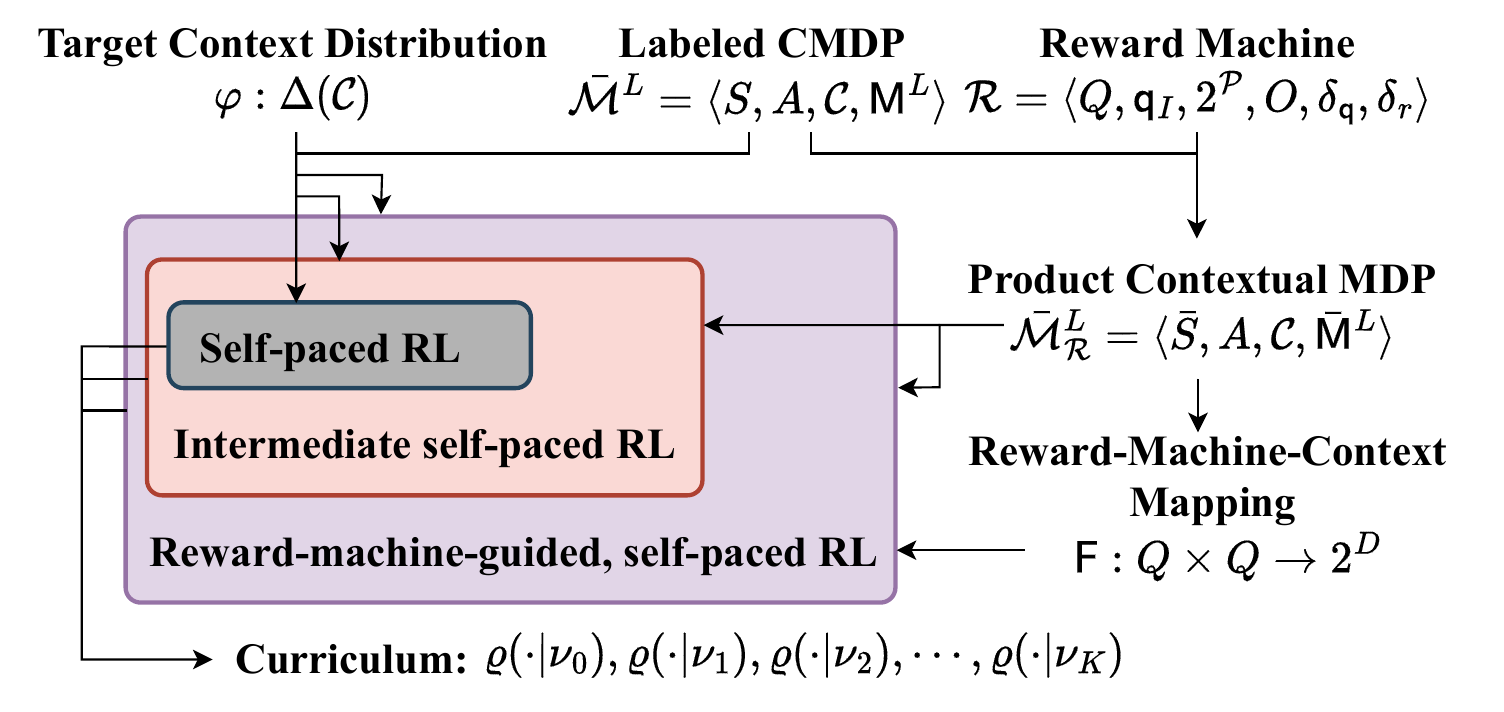}
\caption{Workflow diagram for an existing self-paced RL approach, and two methods that we propose: intermediate self-paced RL and reward-machine-guided, self-paced RL.
}
\label{fig:workflow}
\end{figure}
The design of task sequences, i.e., curricula \citep{bengio2009curriculum}, aims to reduce the sample complexity of teaching reinforcement learning (RL) agents complex behaviors. 
Given a \emph{target} task, a common curriculum design approach is to begin with \emph{easier} tasks and increase the difficulty in a gradual manner, which requires domain expertise to define what is easy or hard \citep{narvekar2020curriculum}. To eliminate the need for manual curriculum design, many studies such as \citet{baranes2010intrinsically,svetlik2017automatic, NIPS2017_453fadbd} focus on automating the process of curriculum generation. \citet{klink2020self_SPCRL} adopt self-paced learning \citep{NIPS2010_e57c6b95} in RL by developing an algorithm that creates a sequence of probability distributions over \emph{contexts} \citep{hallak2015contextual}. The dynamics, the reward function, and the initial state distribution of an environment change with respect to the context. Given a \emph{target context distribution},
a self-paced RL algorithm iteratively generates \emph{context distributions} that maximizes the expected discounted return, regularized by the Kullback-Leibler (KL) divergence from the target context distribution.

Although empirical evidence by \citet{klink2021probabilistic} suggests that self-paced RL outperforms the state-of-the-art curriculum learning methods \citep{florensa2018automatic,portelas2020teacher}, 
existing self-paced RL approaches work poorly in long-horizon planning tasks, which involve temporally extended behaviors. 
We focus on tasks where the reward depends on the history of states and actions. In other words, the reward function of such a long-horizon planning task is non-Markovian. 
A remedy to tasks that require temporally extended behaviors is to expose the high-level structural relationships to the agent \citep{singh1992reinforcement, parr1997reinforcement}.
\citet{icarte2018using} use a type of finite-state machine, called \emph{reward machines}, as the high-level structural knowledge to encode non-Markovian reward functions in RL.  

We claim that exploiting the high-level structural knowledge about a long-horizon planning task can improve self-paced RL. To this end, we study self-paced RL for long-horizon planning tasks in which such knowledge is available a priori to the agent in the form of reward machines. Specifically, we focus on contextual long-horizon planning tasks, where the context parameterizes the dynamics and the non-Markovian reward function. The underlying temporal task structure remains the same irrespective of the context, hence a reward machine can encode all possible non-Markovian reward functions. We define a labeled contextual Markov decision process (MDP) to model such long-horizon planning tasks (see \cref{fig:workflow}).

\paragraph{Contribution.} Our contribution is three-fold. \begin{enumerate*}[label=\arabic*)]
    \item We propose an intermediate self-paced RL algorithm that combines a labeled contextual MDP and its reward machine in a product contextual MDP to update the policy and value functions of an RL agent. 
    \item We establish a mapping that, given a transition in the reward machine, outputs the smallest set of context parameters, that determine whether the transition, namely a high-level event, occurs or not. 
    \item We develop a reward-machine-guided, self-paced RL algorithm that exploits reward machines not only to update the policy and value functions but also to navigate the generation of curricula via the proposed mapping (see \cref{fig:workflow}). 
\end{enumerate*}

Our experiments conclude that, first, proposed reward-machine-guided and intermediate self-paced RL algorithms enable RL agents to accomplish long-horizon planning tasks by encoding non-Markovian reward functions as reward machines, whereas state-of-the-art automated curriculum generation methods fail to do so; and, second, guiding curriculum generation via a \emph{reward-machine-context mapping} not only boosts learning speed reliably but also stabilizes the curriculum generation process by reducing curricula variance by up to four orders of magnitude, and thus avoid inefficient exploration of the curriculum space.

\section{Related Work}

We propose an automated curriculum generation method, that exploits high-level structural knowledge about long-horizon planning tasks. Our work falls under two subjects.

\paragraph{Curriculum learning for RL.} Automatically generating curricula in RL modifies the configuration of the environment iteratively to accelerate convergence to optimal policies. As we do, many studies in the literature consider a curriculum as a sequence of distributions over environment configurations.
\citet{pmlr-v78-florensa17a} proposes the generation of distributions over initial states that iteratively get further away from goal states. 
Others focus on goal-conditioned RL where a curriculum is a sequence of distributions over goal states that optimize value disagreement \citep{NEURIPS2020_566f0ea4}, feasibility and coverage of goal states \citep{racaniere2019automated}, intrinsic motivation \citep{baranes2010intrinsically,portelas2020teacher}, and intermediate goal difficulty \citep{florensa2018automatic}. 
For procedural content generation environments, curricula prioritize levels with higher learning potential \citep{jiang2021prioritized,jiang2021replay}.
In comparison, self-paced RL is adopted from supervised learning where training samples are automatically ordered in increasing complexity \citep{NIPS2010_e57c6b95,jiang2015self}. 
\citet{ren2018self} considers curricula as a sequence of environment interactions and proposes a self-paced mechanism that minimizes coverage penalty.
\citet{eimer2021self}'s work generates a sequence of contexts, not distributions, with respect to their capacity of value improvement. \citet{klink2020self_SPCRL,NEURIPS2020_68a97503,klink2021probabilistic,klink2022curriculum} formulate the generation of curricula as interpolations between distributions over contexts. Similarly, \citet{chen2021variational} study interpolations between task distributions, but not under the self-paced RL framework.

\paragraph{Incorporating high-level structural knowledge into RL.} \citet{singh1992reinforcement,parr1997reinforcement,sutton1999between,dietterich2000hierarchical} propose the idea of incorporating high-level structural knowledge to decompose a task into a hierarchy of subtasks. The proposed hierarchy allows the agent to learn a meta-controller that chooses between subtasks to pursue, and a low-level controller that acts in the chosen subtask. Another way to incorporate such knowledge is to capture temporal abstractions in long-horizon planning tasks via temporal logic \citep{bacchus1996rewarding,li2017reinforcement,littman2017environment}, or reward machines  \citep{icarte2018using,camacho2019ltl}, which address MDPs with non-Markovian structures. We investigate a multi-task setting with non-Markovian reward functions and propose an automated curriculum generation approach that uses reward machines, 1) to encode non-Markovian reward functions; and 2) to guide the curriculum generation process. Similar to curriculum learning, \citet{toro2018teaching,xu2019transfer,kuo2020encoding,zheng2022lifelong,velasquez2021dynamic}
study the use of temporal logic and reward machines in topics such as generalization, transfer learning, and multi-task learning.  

\section{Preliminaries}

\begin{figure}[t]
\centering
\includegraphics[width=0.35\textwidth]{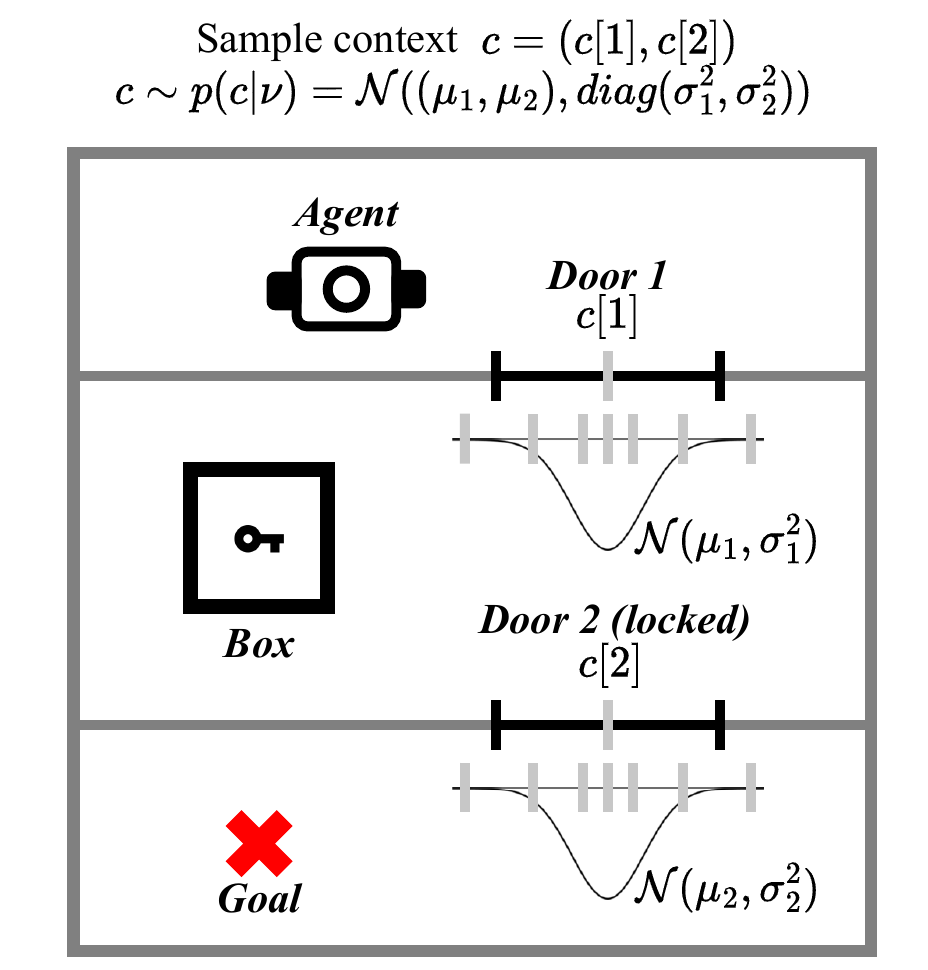}
\caption{The two-door environment: A context $\context=(c[1],c[2])$ determines the positions of \emph{Door 1} and \emph{Door 2} with $\context[1]$ and $\context[2]$, respectively.}
\label{fig:twodoor_env}
\end{figure}
In this section, we provide the background for our problem of interest.
We illustrate a two-door environment, (see \cref{fig:twodoor_env}), which we will revisit throughout the paper.
The agent has to complete 4 subtasks in the following order: (1) Passing through \emph{Door 1}, (2) getting a key from \emph{Box}, (3) opening \emph{Door 2} with the key, and (4) arriving at \emph{Goal}. The agent has to avoid hitting the walls that separate the rooms.

\subsection{Labeled MDPs and Reward Machines}
\begin{definition}
\label{def:LMDP}
A labeled Markov decision process \cite{xu2020joint} is a tuple
$\Lmdp = \langle \LmdpStates, \LmdpActions, \LmdpTransition, \LmdpRewardFunction, \LmdpInit, \LmdpDiscount, \LmdpLabels, \LmdpLabelingFunction\rangle$
consisting of a state space  $\LmdpStates$, 
an action space $\LmdpActions$,
a probabilistic transition function
$\LmdpTransition \colon \LmdpStates\times \LmdpActions \times \LmdpStates \rightarrow [0,1]$, and
an initial state distribution $\LmdpInit: \LmdpStates \to [0,1]$. 
A reward function $\LmdpRewardFunction: (\LmdpStates \times \LmdpActions)^+ \times \LmdpStates \rightarrow \Reals$,
and a discount factor $\LmdpDiscount \in [0,1)$ specify the returns to the agent.
A finite set $\LmdpLabels$ of propositional variables, 
and a labeling function $\LmdpLabelingFunction: \LmdpStates\times \LmdpActions\times \LmdpStates\rightarrow \RMInputAlphabet$
determine the set of high-level events that the agent sees in the environment.
\end{definition}

A \emph{policy} $\LmdpPolicy$ is a function mapping states in $\LmdpStates$ to a probability distribution over actions in $\LmdpActions$.
At state $s\in \LmdpStates$, an agent using policy $\LmdpPolicy$ picks an action $\LmdpCommonAction$ with probability $\LmdpPolicy(\LmdpCommonState, \LmdpCommonAction)$,
and the new state $\LmdpCommonState'$ is chosen with probability $\LmdpTransition(\LmdpCommonState, \LmdpCommonAction, \LmdpCommonState')$. 

For a fixed context, we can model the two-door environment as a labeled MDP. The states are the coordinates of the agent and the actions are moving in the four cardinal directions, whereas the transitions are deterministic. The agent receives the labels $\{\PointMassLabelDoorOne\}$, $\{\PointMassLabelBox\}$, $\{\PointMassLabelDoorTwo\}$, $\{\PointMassLabelGoal\}$, and $\{\PointMassLabelWall\}$ when it moves onto \emph{Door 1}, \emph{Box}, \emph{Door 2}, \emph{Goal}, and the walls, respectively.

\begin{definition}
A reward machine \cite{icarte2018using} $\RM = \langle \RMStates, \RMInit, \RMInputAlphabet, \RMOutputAlphabet, \RMTransitionFunction, \RMOutputFunction \rangle$ consists of a finite, nonempty set $\RMStates$ of states, an initial state $\RMInit \in \RMStates$, an input alphabet $\RMInputAlphabet$, an output alphabet $\RMOutputAlphabet \subset \Reals$, a deterministic transition function $\RMTransitionFunction \colon \RMStates \times \RMInputAlphabet \to \RMStates$, and an output function $\RMOutputFunction \colon \RMStates \times \RMInputAlphabet \to \RMOutputAlphabet$. 
\end{definition}

Reward machines encode non-Markovian reward functions. The run $\RMCommonState_0 (\LmdpCommonLabel_1, \LmdpReward_1) \RMCommonState_1 (\LmdpCommonLabel_2, \LmdpReward_2)\ldots (\LmdpCommonLabel_k, \LmdpReward_k) \RMCommonState_{k+1}$ of a reward machine $\RM$ on a label sequence $\LmdpCommonLabel_1\ldots \LmdpCommonLabel_k\in (\RMInputAlphabet)^*$ is a sequence of states and label-reward pairs such that $\RMCommonState_0 = \RMInit$,  $\RMTransitionFunction(\RMCommonState_i, \LmdpCommonLabel_i) = \RMCommonState_{i+1}$ and $\RMOutputFunction(\RMCommonState_i,\LmdpCommonLabel_i) = \LmdpReward_i$ for all $i\in\{0,\ldots, k\}$.
The reward machine $\RM$ produces a sequence of rewards from an input label sequence as $\RM(\LmdpCommonLabel_1\ldots\LmdpCommonLabel_k) = \LmdpReward_1 \ldots \LmdpReward_k$. 
We say that a reward machine $\RM$ \emph{implements} the reward function $R$ of an MDP if for every trajectory $s_0 a_0 \ldots s_k a_k s_{k+1}$ and the corresponding label sequence $\LmdpCommonLabel_1\ldots \LmdpCommonLabel_k$, the reward sequence equals $\RM(\LmdpCommonLabel_1\ldots \LmdpCommonLabel_k)$.
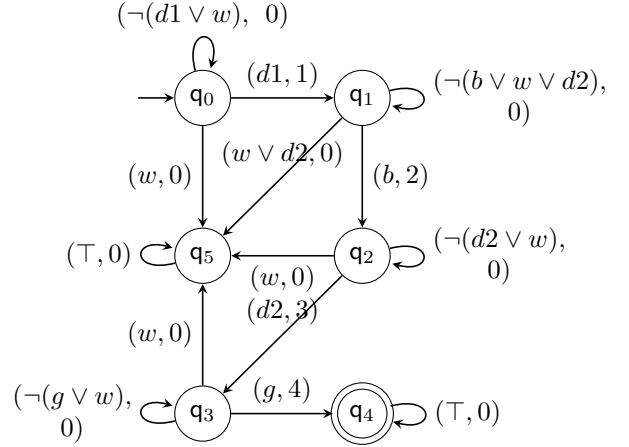
\begin{figure}
    \centering
    \begin{tikzpicture}[
    shorten >=.1pt,node distance=2.1
    cm,on grid,auto, align=center,
    state/.style={circle, draw, minimum size=.1cm}
] 
        \node[state, initial] (q0) {$\RMCommonState_0$};
        \node[state, right of=q0] (q1) {$\RMCommonState_1$};
        \node[state, below of=q0] (q5) {$\RMCommonState_5$};
        \node[state, right of=q5] (q2) {$\RMCommonState_2$};
        \node[state, below of=q5] (q3) {$\RMCommonState_3$};
        \node[state, accepting, right of=q3] (q4) {$\RMCommonState_4$};
        \draw (q0) edge node {\tt $(\PointMassLabelDoorOne, 1)$}(q1);
        \draw (q1) edge node {\tt $(\PointMassLabelBox, 2)$}(q2);
        \draw (q2) edge[above] node {\tt $(\PointMassLabelDoorTwo, 3)$}(q3);
        \draw (q3) edge node {\tt $(\PointMassLabelGoal, 4)$}(q4);
        
        \draw (q0) edge[left] node {\tt $(\PointMassLabelWall, 0)$}(q5);
        \draw (q1) edge[above] node {\tt $(\PointMassLabelWall \vee \PointMassLabelDoorTwo, 0)$}(q5);
        \draw (q2) edge node {\tt $(\PointMassLabelWall, 0)$}(q5);
        \draw (q3) edge node {\tt $(\PointMassLabelWall, 0)$}(q5);

        \draw (q4) edge[loop right] node {\tt $(\top, 0)$} (q4);
        \draw (q5) edge[loop left] node {\tt $(\top, 0)$} (q5);
        
        \draw (q0) edge[loop above] node {\tt$(\neg (\PointMassLabelDoorOne \vee \PointMassLabelWall),$ $0)$} (q0);
        \draw (q1) edge[loop right] node {\tt$(\neg (\PointMassLabelBox \vee \PointMassLabelWall \vee \PointMassLabelDoorTwo),$\\ $0)$} (q1);
        \draw (q2) edge[loop right] node {\tt$(\neg (\PointMassLabelDoorTwo \vee \PointMassLabelWall),$ \\ $0)$} (q2);
        \draw (q3) edge[loop left] node {\tt$(\neg (\PointMassLabelGoal \vee \PointMassLabelWall),$ \\ $0)$} (q3);
    \end{tikzpicture}
        \caption{Reward machine of the two-door environment}
        \label{fig:RM_two_door}
\end{figure}

The reward machine of the two-door environment is in \cref{fig:RM_two_door}. Every node is a state in the reward machine. An edge $(\RMCommonState_i,\RMCommonState_j)$ with tuple $(\LogicFormula,\LmdpReward)$ indicates that given a label $\LmdpCommonLabel\in\RMInputAlphabet$ satisfying propositional formula $\LogicFormula$, the transition from state $\RMCommonState_i$ to $\RMCommonState_j$ yields reward $\CmdpReward=\RMOutputFunction(\RMCommonState_i,\LmdpCommonLabel)$. For instance, the transition ($\RMCommonState_1,\RMCommonState_5$) with ($\PointMassLabelWall \lor \PointMassLabelDoorTwo, 0$) in \cref{fig:RM_two_door} occurs if a label $\LmdpCommonLabel$ satisfies $\LogicFormula=\PointMassLabelWall \lor \PointMassLabelDoorTwo$, i.e., if the agent moves into a wall $\PointMassLabelWall$ or passes the second door $\PointMassLabelDoorTwo$, yielding a reward of $0$. The agent receives rewards of 1, 2, 3, and 4 upon completing the four subtasks, respectively. We note that there can be multiple reward machines that encode an MDP's reward function, and reward machines may differ on a label sequence that does not correspond to any trajectory of the MDP.

\subsection{Contextual MDPs}
\begin{definition}
A contextual Markov decision process (CMDP) \cite{hallak2015contextual} $\Cmdp=\langle\CmdpStates,\CmdpActions,\CmdpContextSpace, \CmdpMapping \rangle$ is defined by a state space $\CmdpStates$, an action space $\CmdpActions$, a context space $\CmdpContextSpace$ and a mapping $\CmdpMapping$ from $\CmdpContextSpace$ to MDP parameters. $\Cmdp$ represents a family of MDPs parameterized by contexts $c \in\CmdpContextSpace\subseteq\Reals^n$, $n\in\PositiveIntegers$. An MDP in this family is a tuple $\CmdpMapping(c)=\langle\CmdpStates,\CmdpActions, \CmdpTransition, \CmdpRewardFunction, \CmdpInitialDistribution, \CmdpDiscount\rangle$ that shares the same state space $\CmdpStates$ and action space $\CmdpActions$ with other members, but its probabilistic transition function $p_c:\CmdpStates\times\CmdpActions\times\CmdpStates\to[0,1]$, reward function $\CmdpRewardFunction:\CmdpStates\times\CmdpActions\to[0,1]$ and initial state distribution $\CmdpInitialDistribution:\CmdpStates\to[0,1]$ depend on $c$.
\end{definition}

CMDPs appear in the multi-task RL literature to model tasks, where transition functions, reward functions, and initial state distributions are parameterized via contexts. Although the two-door environment in \cref{fig:twodoor_env} has a context that determines the door positions, which in return affects the transition and reward functions, the CMDP framework fails to model such an environment as the two-door environment has a non-Markovian reward function. We present a new MDP formulation to address this limitation in \cref{section:problem}.

Given a CMDP $\Cmdp$, \emph{contextual RL} \cite{hallak2015contextual} aims to learn a policy that maximizes the expectation of the value of an initial state in context $\context$ sampled from a target context distribution $\TargetContextDistribution:\CmdpContextSpace\to[0,1]$, namely, $\max_{\omega}\Expectation_{\TargetContextDistribution(\context),\CmdpInitialDistribution(\CmdpCommonState)}[\ValueFunction(\CmdpCommonState,\context)],$
where $\ValueFunction(\CmdpCommonState,\context)$ is the value of state $\CmdpCommonState$ in context $\context$ and encodes the expected discounted return obtained by following the policy $\CmdpPolicy_{\PolicyParameter}(\CmdpCommonAction|\CmdpCommonState,\context)$ as $\ValueFunction(\CmdpCommonState,\context)=\Expectation_{\CmdpPolicy_{\PolicyParameter}(\CmdpCommonAction|\CmdpCommonState,\context)}[\CmdpRewardFunction(\CmdpCommonState,\CmdpCommonAction)+\CmdpDiscount \Expectation_{\CmdpTransition(\CmdpCommonState'|\CmdpCommonState,\context)}[\ValueFunction(\CmdpCommonState',\context)]].$

\subsection{Self-Paced Reinforcement Learning}

\cite{klink2020self_SPCRL, NEURIPS2020_68a97503, klink2021probabilistic} propose algorithms under the \emph{Self-paced RL} framework to address the contextual RL problem. \emph{Self-paced RL} aims to iteratively generate a sequence of context distributions by maximizing the expected performance with respect to the current distribution, which is regularized by the KL divergence from the target context distribution, namely,
\begin{align}
    \max_{\ContextDistributionParameter, \PolicyParameter} \quad & \Expectation_{\ContextDistribution(\context|\ContextDistributionParameter), \CmdpInitialDistribution(\CmdpCommonState)}[\ValueFunction(\CmdpCommonState,\context)] - \KLCoefficient \KLDivergence(\ContextDistribution(\context|\ContextDistributionParameter)\:||\:\TargetContextDistribution(\context)) \nonumber \\
    \textrm{s.t.} \quad &  \KLDivergence(\ContextDistribution(\context|\ContextDistributionParameter)\:||\:\ContextDistribution(\context|\ContextDistributionParameter_{prev})) \leq \RelativeEntropyBound,
    \label{eq:SPRL_objective}
\end{align}
where $\ContextDistribution(c|\ContextDistributionParameter)$ and $\KLCoefficient$ are the current context distribution parameterized by $\ContextDistributionParameter$ and the regularization coefficient, respectively. \citet{klink2020self_SPCRL} introduce the constraint in (\ref{eq:SPRL_objective}) to restrict the divergence of the current context distribution from the previous context distribution parameterized by $\ContextDistributionParameter_{prev}$. \citet{NEURIPS2020_68a97503} propose a way to estimate the expectation in (\ref{eq:SPRL_objective}). By following policy $\CmdpPolicy_{\PolicyParameter}$, they collect a set $\SPRLTrajectorySet=\{(\context_i, \Trajectory_i) | \context_i \sim \ContextDistribution(\context|\ContextDistributionParameter_{prev}),i\in\{1,2,\cdots,M\}\}$ of trajectories $\Trajectory_i=(\CmdpCommonState_{i,0},\CmdpCommonAction_{i,0},\CmdpReward_{i,1},\CmdpCommonState_{i,1}),\cdots,(\CmdpCommonState_{i,T_i-1},\CmdpCommonAction_{i,T_i-1},\CmdpReward_{i,T_i},\CmdpCommonState_{i,T_i})$, where $\CmdpReward_{i,t+1}=R_{\context_{i}}(\CmdpCommonState_{i,t},\CmdpCommonAction_{i,t},\CmdpCommonState_{i,t+1})$ is the reward received at time $t+1$ in trajectory $\Trajectory_i$.
Then, they use the cumulative sum of discounted rewards collected in trajectories to obtain an unbiased estimator of the expectation as
\begin{equation}
    \frac{1}{\NumberOfRollouts} \sum_{i=1}^{\NumberOfRollouts} \frac{\ContextDistribution(\context_i|\ContextDistributionParameter)}{\ContextDistribution(\context_i|\ContextDistributionParameter_{prev})} \sum_{t=0}^{T_i-1} \CmdpDiscount^t \CmdpReward_{i, t+1},
    \label{eq:SPRL_importance_reward}
\end{equation}
where $\frac{\ContextDistribution(\context_i|\ContextDistributionParameter)}{\ContextDistribution(\context_i|\ContextDistributionParameter_{prev})}$ is an importance weight used to estimate the value of state $\CmdpCommonState_{i,0}$ in context $\context_i$ with respect to the current context distribution $\ContextDistribution(\cdot|\ContextDistributionParameter)$, as $\context_{i}$ is sampled from $\ContextDistribution(\cdot|\ContextDistributionParameter_{prev})$.

\section{Problem Formulation}
\label{section:problem}

We begin with integrating a labeling function into a CMDP to propose a labeled CMDP. We use labeled CMDPs to model long-horizon planning tasks, described via contexts.

\begin{definition}
A labeled CMDP $\LCmdp=\langle \LCmdpStates, \LCmdpActions, \LCmdpContextSpace, \LCmdpMapping \rangle$ consists of a CMDP $\Cmdp$ and a labeling function $\LCmdpLabelingFunction: \LCmdpStates \times \LCmdpActions \times \LCmdpStates \to \RMInputAlphabet$. A member of a labeled CMDP $\LCmdp$ is a labeled MDP $\LCmdpMapping(\context)=\langle \LCmdpStates, \LCmdpActions, \LCmdpTransition, \LCmdpRewardFunction, \LCmdpInitialDistribution, \LCmdpDiscount, \LCmdpLabels, \LCmdpLabelingFunction \rangle$ parameterized by a context $\context \in \LCmdpContextSpace$.
\end{definition}

A labeled MDP $\LCmdpMapping(c)$ differs from a labeled MDP $\Lmdp$, from \cref{def:LMDP}, as the former depends on a context $\context \in \LCmdpContextSpace$. 
However, every labeled MDP $\LCmdpMapping(c)$ obtained in $\LCmdp$ can use the same reward machine, that encodes the underlying task structure. Throughout this paper, we make Assumption \ref{assumption:boxshapedcontextspace} on the context space $\LCmdpContextSpace$ of a labeled CMDP $\LCmdp$. 
\begin{assumption}
There exists $\LCmdpNumberOfDimensions\in\PositiveIntegers$ and  $\LCmdpContextSpace[1],\cdots,\LCmdpContextSpace[\LCmdpNumberOfDimensions]$ such that  $\LCmdpContextSpace=\prod_{i=1}^{\LCmdpNumberOfDimensions}\LCmdpContextSpace[i]$. For $\context=(\context[1],\cdots,\context[\LCmdpNumberOfDimensions])\in\LCmdpContextSpace$, we call $\context[i]$ the $i^{th}$ context parameter of $\context$. We say $\LCmdpNumberOfDimensions$ is the dimension of the context space $\LCmdpContextSpace$, referred to as $dim(\LCmdpContextSpace)$.
\label{assumption:boxshapedcontextspace}
\end{assumption}

The assumption of a box-shaped context space $\LCmdpContextSpace$ of a labeled CMDP $\LCmdp$ allows us to establish a mapping from the transitions in the reward machine to context parameters.

\paragraph{Problem statement.}
Given a labeled CMDP $\LCmdp$, a reward machine $\RM$ that encodes the non-Markovian reward function of $\LCmdp$, and a target context distribution $\TargetContextDistribution$, we want to obtain a policy that maximizes the expected discounted return in contexts $\context$ drawn from $\TargetContextDistribution$, namely,
\begin{equation} \max_{\omega}\Expectation_{\TargetContextDistribution(\context),\LCmdpInitialDistribution(\LCmdpCommonState),\LCmdpPolicy_{\PolicyParameter}(\LCmdpCommonAction|\LCmdpCommonState,\context)}[\sum_{t=0}^{T-1}\LCmdpDiscount^t\LCmdpRewardFunction(\LCmdpHistory_t)],
\end{equation}
where $\LCmdpHistory_t=\LCmdpCommonState_0\LCmdpCommonAction_0\cdots\LCmdpCommonState_t\LCmdpCommonAction_t\LCmdpCommonState_{t+1}$ is the history at time $t$. Note that as the reward machine $\RM$ encodes the reward function $\LCmdpRewardFunction$ for any context $c$, we have $\LCmdpRewardFunction(\LCmdpHistory_t)=\RM(\LCmdpCommonLabel_1\cdots\LCmdpCommonLabel_{t+1})$ with labels $\LCmdpCommonLabel_{\tau}=\LCmdpLabelingFunction(\LCmdpCommonState_{\tau-1},\LCmdpCommonAction_{\tau-1},\LCmdpCommonState_{\tau})$ for $\tau\in[t+1].$

\section{Method}

We first present an intermediate self-paced RL algorithm by adopting the approach by \citet{icarte2018using}, which runs an RL algorithm using reward machines. Then, we discuss how contexts affect the transitions in a reward machine, and define a reward-machine-context mapping. Finally, integrating the proposed mapping into the intermediate algorithm, we develop a reward-machine-guided self-paced RL algorithm.
\subsection{Intermediate Self-Paced RL}
We construct a product contextual MDP that combines a labeled contextual MDP $\LCmdp$ and its reward machine $\RM$.
\begin{definition}
Given a labeled contextual Markov decision process $\LCmdp$ and a reward machine $\RM$, we define a product contextual MDP as the tuple $\Pmdp=\langle \PmdpStates, \PmdpActions, \PmdpContextSpace, \PmdpMapping \rangle$ that has an extended state space $\PmdpStates=\LCmdpStates \times \RMStates$, an action space $\PmdpActions$, a context space $\PmdpContextSpace$, and a mapping $\PmdpMapping$ from the context space to product MDP parameters. A member of this product contextual MDP is a tuple $\PmdpMapping(\context)=\langle \PmdpStates, \PmdpActions, \PmdpTransition, \PmdpRewardFunction, \PmdpInitialDistribution, \PmdpDiscount, \PmdpLabels, \PmdpLabelingFunction \rangle$ with a probabilistic transition function $\PmdpTransition: \PmdpStates \times \PmdpActions \times \PmdpStates \to [0,1]$, a reward function $\PmdpRewardFunction: \PmdpStates \times \PmdpActions \times \PmdpStates \to \Reals$, and an initial state distribution $\PmdpInitialDistribution: \LCmdpStates \times \{\RMInit\} \to [0,1]$. We define them as 
\begin{align}
   \PmdpTransition((\LCmdpCommonState, \RMCommonState&), \PmdpCommonAction, (\LCmdpCommonState', \RMCommonState')) \nonumber \\&= \begin{cases}
    \LCmdpTransition (\LCmdpCommonState, \PmdpCommonAction, \LCmdpCommonState') & \text{if $\RMCommonState'=\RMTransitionFunction(\RMCommonState,\PmdpLabelingFunction(\LCmdpCommonState, \PmdpCommonAction, \LCmdpCommonState'))$}; \\
    0 & \text{otherwise},
    \end{cases}  
\end{align}
\begin{equation}
    \PmdpRewardFunction((\LCmdpCommonState, \RMCommonState), \PmdpCommonAction, (\LCmdpCommonState', \RMCommonState'))=\RMOutputFunction(\RMCommonState,\PmdpLabelingFunction(\LCmdpCommonState, \PmdpCommonAction, \LCmdpCommonState'))),
\end{equation}
\begin{equation}
    \PmdpInitialDistribution(\LCmdpCommonState,\RMInit)=\LCmdpInitialDistribution(\LCmdpCommonState),
\end{equation}
where states $\LCmdpCommonState,\LCmdpCommonState' \in \LCmdpStates$ and $\RMCommonState,\RMCommonState' \in \RMStates$ come from labeled contextual MDP $\LCmdp$ and reward machine $\RM$, respectively.
\end{definition}

A trajectory of length $T$ on the product MDP $\PmdpMapping(\context)$ is $\PmdpTrajectory_i = (\PmdpCommonState_{0},\PmdpCommonAction_{0},\PmdpReward_{1},\PmdpCommonState_{1}),\cdots,$ $(\PmdpCommonState_{T-1},\PmdpCommonAction_{T-1},\PmdpReward_{i,T},\PmdpCommonState_{T})$, where $\PmdpReward_{t}=\PmdpRewardFunction(\PmdpCommonState_{t-1},\PmdpCommonAction_{t-1},\PmdpCommonState_{t})$, $t\in\{1,2,\cdots,T\}$. The intermediate self-paced RL algorithm replaces the contextual MDP trajectories with the product contextual MDP trajectories. Therefore, an RL agent can capture the temporal task structure by learning a policy via trajectories rolled out in a product contextual MDP. The intermediate self-paced RL algorithm optimizes the following objective to generate context distribution
\begin{align}
    \max_{\ContextDistributionParameter_{k}} \quad & \frac{1}{\NumberOfRollouts} \sum_{i=1}^{\NumberOfRollouts} \sum_{t=0}^{T_i-1} \CmdpDiscount^t \frac{\ContextDistribution(\context_i|\ContextDistributionParameter_{k})}{\ContextDistribution(\context_i|\ContextDistributionParameter_{k-1})} \PmdpReward_{i, t+1} \nonumber \\ 
    & - \KLCoefficient_{k} \KLDivergence(\ContextDistribution(\context|\ContextDistributionParameter_{k})\:||\:\TargetContextDistribution(\context)) \nonumber \\
    \textrm{s.t.} \quad &  \KLDivergence(\ContextDistribution(\context|\ContextDistributionParameter_{k})\:||\:\ContextDistribution(\context|\ContextDistributionParameter_{k-1})) \leq \RelativeEntropyBound,
    \label{eq:Intermediate_SPRL_objective}
\end{align}
where $\KLCoefficient_{k}$ is the regularization coefficient at the context distribution update $k$. 
Appendix~B provides the pseudocode for this algorithm.

\subsection{From Reward Machines to Contexts}
In the two-door environment (see \cref{fig:twodoor_env}), we observe that the first context parameter, i.e., the position of the first door, determines which $\Pmdp$ transitions enable the agent to pass the first door, yielding label $\{\PointMassLabelDoorOne\}$. If we change the value of the first context parameter, then we have a different set of $\Pmdp$ transitions that yield label $\{\PointMassLabelDoorOne\}$. However, this modification has no impact on the transitions that enable the agent to pass the second door, i.e., to obtain label $\{\PointMassLabelDoorTwo\}$. Taking this observation into account, we show how context parameters influence the transitions in a reward machine, then we define \emph{reward machine-context mapping} $\RMContextMapping:\RMStates\times\RMStates\to2^{\LCmdpDimensions}$, which outputs the smallest set of context parameters that determines if a transition in the reward machine happens. 

\begin{definition}
Given a product contextual MDP $\Pmdp$, we define a set $\RMMDPContextSet\subseteq \PmdpDimensions=\{1,2,\cdots,dim(\LCmdpContextSpace)\}$, as \emph{the set of identifier context parameters on a transition $(\RMCommonState, \LCmdpCommonState, \LCmdpCommonAction, \LCmdpCommonState')$} if
\begin{align}
    \forall \context,\context'\in\LCmdpContextSpace,
     \context[i]=\context'[i], \forall i\in\RMMDPContextSet \implies \nonumber\\
     \RMTransitionFunction(\RMCommonState,\LCmdpLabelingFunction(\LCmdpCommonState,\LCmdpCommonAction,\LCmdpCommonState'))= \RMTransitionFunction(\RMCommonState,\LmdpLabelingFunction_{\context'}(\LCmdpCommonState,\LCmdpCommonAction,\LCmdpCommonState')),
\end{align}
where $(\RMCommonState, \LCmdpCommonState, \LCmdpCommonAction, \LCmdpCommonState')\in\RMStates \times \LCmdpStates \times \LCmdpActions \times \LCmdpStates$. That is, $\RMMDPContextSet$ is the set of indices of the context parameters that identify the next state of the reward machine given a state $\RMCommonState$ of the reward machine and a transition $(\LCmdpCommonState,\LCmdpCommonAction,\LCmdpCommonState')$ in the labeled MDP.
\end{definition}

Notice that  $\LCmdpDimensions$ is a set of identifier context parameters for all $(\RMCommonState, \LCmdpCommonState, \LCmdpCommonAction, \LCmdpCommonState')\in\RMStates \times \LCmdpStates \times \LCmdpActions \times \LCmdpStates$.

\begin{lemma}
If $\RMMDPContextSet_1$ is a set of identifier context parameters on $(\RMCommonState, \LCmdpCommonState, \LCmdpCommonAction, \LCmdpCommonState')$, and $\RMMDPContextSet_1\subseteq\RMMDPContextSet_2\subseteq\PmdpDimensions$, then $\RMMDPContextSet_2$ is also a set of identifier context parameters on $(\RMCommonState, \LCmdpCommonState, \LCmdpCommonAction, \LCmdpCommonState')$.
\label{lemma:subsetidentifier}
\end{lemma}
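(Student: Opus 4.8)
The plan is to unwind the definition of "set of identifier context parameters" and exploit the fact that agreeing on a larger index set implies agreeing on a smaller one. Concretely, fix the transition $(\RMCommonState, \LCmdpCommonState, \LCmdpCommonAction, \LCmdpCommonState')$ and take arbitrary contexts $\context, \context' \in \LCmdpContextSpace$ with $\context[i] = \context'[i]$ for all $i \in \RMMDPContextSet_2$. Since $\RMMDPContextSet_1 \subseteq \RMMDPContextSet_2$, the same equality holds in particular for all $i \in \RMMDPContextSet_1$. Now invoke the hypothesis that $\RMMDPContextSet_1$ is a set of identifier context parameters on $(\RMCommonState, \LCmdpCommonState, \LCmdpCommonAction, \LCmdpCommonState')$: the premise of that implication is met, so its conclusion
\[
\RMTransitionFunction(\RMCommonState,\LCmdpLabelingFunction(\LCmdpCommonState,\LCmdpCommonAction,\LCmdpCommonState'))= \RMTransitionFunction(\RMCommonState,\LmdpLabelingFunction_{\context'}(\LCmdpCommonState,\LCmdpCommonAction,\LCmdpCommonState'))
\]
holds. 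Since $\context, \context'$ were arbitrary subject to agreeing on $\RMMDPContextSet_2$, this establishes exactly the defining property of $\RMMDPContextSet_2$ being a set of identifier context parameters on $(\RMCommonState, \LCmdpCommonState, \LCmdpCommonAction, \LCmdpCommonState')$.

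The only bookkeeping point worth stating explicitly is that $\RMMDPContextSet_2 \subseteq \PmdpDimensions$ is part of the hypothesis, so $\RMMDPContextSet_2$ is a legitimate candidate set for the definition; there is nothing to check about well-definedness beyond this. I would present the argument in roughly three sentences, with the chain of implications laid out as: agreement on $\RMMDPContextSet_2$ $\Rightarrow$ agreement on $\RMMDPContextSet_1$ (by $\RMMDPContextSet_1 \subseteq \RMMDPContextSet_2$) $\Rightarrow$ equality of reward-machine transitions (by the identifier property of $\RMMDPContextSet_1$).

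There is essentially no obstacle here — the lemma is a monotonicity statement that falls out immediately from the logical form of the definition, since strengthening a hypothesis (requiring agreement on more coordinates) can only make an implication easier to satisfy. If anything, the "hard part" is purely presentational: making sure the quantifier order ("for all $\context,\context'$ such that ...") is handled cleanly so that the reader sees that fixing the transition first and then quantifying over context pairs is what both the hypothesis and the goal require. I would keep the proof to a few lines and not belabor it.
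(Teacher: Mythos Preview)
Your proof is correct and follows exactly the same approach as the paper's: assume agreement on $\RMMDPContextSet_2$, deduce agreement on the subset $\RMMDPContextSet_1$, and invoke the identifier property of $\RMMDPContextSet_1$. The paper's version is simply terser (one line), but the logical content is identical.
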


\begin{proof}
Suppose $\context[i]=\context'[i]$, $\forall i \in \RMMDPContextSet_2$, then $\context[i]=\context'[i]$, $\forall i \in \RMMDPContextSet_1$ by definition.
\end{proof}

We note that if the empty set $\emptyset$ is a set of identifier context parameters on $(\RMCommonState, \LCmdpCommonState, \LCmdpCommonAction, \LCmdpCommonState')$, then the corresponding transition $\RMTransitionFunction(\RMCommonState,\LCmdpLabelingFunction(\LCmdpCommonState,\LCmdpCommonAction,\LCmdpCommonState'))$ in the reward machine does not depend on the choice of context $\context\in \LCmdpContextSpace$.

\begin{theorem}
Under Assumption \ref{assumption:boxshapedcontextspace}, $\RMMDPContextSet_1$ and $\RMMDPContextSet_2$ are sets of identifier context parameters on $(\RMCommonState,\LCmdpCommonState,\LCmdpCommonAction,\LCmdpCommonState')$ if and only if $\RMMDPContextSet_1\cap\RMMDPContextSet_2$ is a set of identifier context parameters on  $(\RMCommonState,\LCmdpCommonState,\LCmdpCommonAction,\LCmdpCommonState')$.
\label{thm:intersection}
\end{theorem}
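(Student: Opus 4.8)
The plan is to prove the two implications separately; Assumption~\ref{assumption:boxshapedcontextspace} is only needed for one of them. For the ``if'' direction I would argue that, if $\RMMDPContextSet_1\cap\RMMDPContextSet_2$ is a set of identifier context parameters on $(\RMCommonState,\LCmdpCommonState,\LCmdpCommonAction,\LCmdpCommonState')$, then because $\RMMDPContextSet_1\cap\RMMDPContextSet_2\subseteq\RMMDPContextSet_1\subseteq\PmdpDimensions$ and $\RMMDPContextSet_1\cap\RMMDPContextSet_2\subseteq\RMMDPContextSet_2\subseteq\PmdpDimensions$, two applications of \cref{lemma:subsetidentifier} show that $\RMMDPContextSet_1$ and $\RMMDPContextSet_2$ are each sets of identifier context parameters on $(\RMCommonState,\LCmdpCommonState,\LCmdpCommonAction,\LCmdpCommonState')$. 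This direction needs nothing beyond the monotonicity lemma.

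The substance is in the ``only if'' direction. Assume $\RMMDPContextSet_1$ and $\RMMDPContextSet_2$ are both sets of identifier context parameters on $(\RMCommonState,\LCmdpCommonState,\LCmdpCommonAction,\LCmdpCommonState')$, write $\RMMDPContextSet=\RMMDPContextSet_1\cap\RMMDPContextSet_2$ (note $\RMMDPContextSet\subseteq\PmdpDimensions$), and fix arbitrary $\context,\context'\in\LCmdpContextSpace$ that agree on every coordinate indexed by $\RMMDPContextSet$. The key step will be to introduce a hybrid context $\context''$ defined coordinatewise by $\context''[i]=\context[i]$ for $i\in\RMMDPContextSet_1$ and $\context''[i]=\context'[i]$ for $i\notin\RMMDPContextSet_1$; here Assumption~\ref{assumption:boxshapedcontextspace} is exactly what guarantees that $\context''$, being built by an independent choice in each factor $\LCmdpContextSpace[i]$, is again an element of $\LCmdpContextSpace=\prod_{i=1}^{\LCmdpNumberOfDimensions}\LCmdpContextSpace[i]$. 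Then $\context$ and $\context''$ agree on all of $\RMMDPContextSet_1$, so the identifier property of $\RMMDPContextSet_1$ gives $\RMTransitionFunction(\RMCommonState,\LmdpLabelingFunction_{\context}(\LCmdpCommonState,\LCmdpCommonAction,\LCmdpCommonState'))=\RMTransitionFunction(\RMCommonState,\LmdpLabelingFunction_{\context''}(\LCmdpCommonState,\LCmdpCommonAction,\LCmdpCommonState'))$; and $\context''$ and $\context'$ agree on all of $\RMMDPContextSet_2$ — for $i\in\RMMDPContextSet_2\setminus\RMMDPContextSet_1$ by construction, and for $i\in\RMMDPContextSet_2\cap\RMMDPContextSet_1=\RMMDPContextSet$ because $\context''[i]=\context[i]=\context'[i]$ — so the identifier property of $\RMMDPContextSet_2$ gives $\RMTransitionFunction(\RMCommonState,\LmdpLabelingFunction_{\context''}(\LCmdpCommonState,\LCmdpCommonAction,\LCmdpCommonState'))=\RMTransitionFunction(\RMCommonState,\LmdpLabelingFunction_{\context'}(\LCmdpCommonState,\LCmdpCommonAction,\LCmdpCommonState'))$. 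Chaining these two equalities through $\context''$ yields $\RMTransitionFunction(\RMCommonState,\LmdpLabelingFunction_{\context}(\LCmdpCommonState,\LCmdpCommonAction,\LCmdpCommonState'))=\RMTransitionFunction(\RMCommonState,\LmdpLabelingFunction_{\context'}(\LCmdpCommonState,\LCmdpCommonAction,\LCmdpCommonState'))$, which is precisely the condition for $\RMMDPContextSet$ to be a set of identifier context parameters.

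I expect the only real obstacle to be conceptual rather than computational: one cannot compare $\context$ and $\context'$ directly, since they may disagree on every index outside $\RMMDPContextSet$, so the argument has to route the comparison through the interpolating context $\context''$, and the legality of $\context''$ is the single place where the box-shaped structure of the context space is indispensable (without Assumption~\ref{assumption:boxshapedcontextspace} such a $\context''$ need not exist). After that bridge is in place the verification reduces to set-membership bookkeeping plus two invocations of the definition of an identifier set; I would also remark that the construction is symmetric in $\RMMDPContextSet_1$ and $\RMMDPContextSet_2$, so fixing the $\RMMDPContextSet_1$-coordinates first is an arbitrary but harmless choice.
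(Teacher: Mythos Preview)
Your proposal is correct and follows essentially the same approach as the paper's proof: the backward direction via \cref{lemma:subsetidentifier}, and the forward direction by interpolating through a hybrid context $\context''$ that agrees with $\context$ on $\RMMDPContextSet_1$ and with $\context'$ on $\RMMDPContextSet_2$, whose existence is guaranteed by Assumption~\ref{assumption:boxshapedcontextspace}. You give a more explicit coordinatewise construction and verification than the paper's sketch, but the argument is the same.
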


\begin{proofsketch}\footnote{See Appendix~A for the complete proof.}
The backward statement comes from Lemma \ref{lemma:subsetidentifier}. For the forward statement, let $\context_{\RMMDPContextSet}=[\context[i]]_{i\in \RMMDPContextSet}$. Then, for any $\context,\context'\in\LCmdpContextSpace$ that satisfy $\context_{\RMMDPContextSet_1\cap\RMMDPContextSet_2}=\context'_{\RMMDPContextSet_1\cap\RMMDPContextSet_2}$, by Assumption \ref{assumption:boxshapedcontextspace} there exists $\context''$ for which $\context''_{\RMMDPContextSet_1}=\context_{\RMMDPContextSet_1}$ and $\context''_{\RMMDPContextSet_2}=\context'_{\RMMDPContextSet_2}$. Therefore, $\RMTransitionFunction(\RMCommonState,\LCmdpLabelingFunction(\LCmdpCommonState,\LCmdpCommonAction,\LCmdpCommonState'))= \RMTransitionFunction(\RMCommonState,\LmdpLabelingFunction_{\context''}(\LCmdpCommonState,\LCmdpCommonAction,\LCmdpCommonState'))= \RMTransitionFunction(\RMCommonState,\LmdpLabelingFunction_{\context'}(\LCmdpCommonState,\LCmdpCommonAction,\LCmdpCommonState'))$.
\end{proofsketch}

\begin{corollary}
Under Assumption \ref{assumption:boxshapedcontextspace}, the set $\RMMDPContextSetALL$ containing all sets of identifier context parameters on $(\RMCommonState,\LCmdpCommonState,\LCmdpCommonAction,\LCmdpCommonState')$ is closed under arbitrary unions and finite intersections.
\label{corr:arbitraryUnInter}
\end{corollary}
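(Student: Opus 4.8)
The plan is to obtain both closure properties almost immediately from \cref{lemma:subsetidentifier} (monotonicity: any superset, within $\PmdpDimensions$, of a set of identifier context parameters is again one) and \cref{thm:intersection} (the intersection of two such sets is again one), while exploiting that every member of $\RMMDPContextSetALL$ is a subset of the \emph{finite} set $\PmdpDimensions=\{1,\dots,dim(\LCmdpContextSpace)\}$.

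First I would dispatch finite intersections. \cref{thm:intersection} gives the binary case directly: if $\RMMDPContextSet_1,\RMMDPContextSet_2\in\RMMDPContextSetALL$ then $\RMMDPContextSet_1\cap\RMMDPContextSet_2\in\RMMDPContextSetALL$. A routine induction on the number of sets then yields $\bigcap_{j=1}^{m}\RMMDPContextSet_j\in\RMMDPContextSetALL$ for any finite subfamily $\RMMDPContextSet_1,\dots,\RMMDPContextSet_m\in\RMMDPContextSetALL$; the empty intersection equals $\PmdpDimensions$, which is a set of identifier context parameters on every $(\RMCommonState,\LCmdpCommonState,\LCmdpCommonAction,\LCmdpCommonState')$ by the remark following the definition, so including it is harmless.

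For unions, take any nonempty subfamily $\{\RMMDPContextSet_\alpha\}_{\alpha\in A}$ with each $\RMMDPContextSet_\alpha\in\RMMDPContextSetALL$ and fix some $\alpha_0\in A$. Then $\RMMDPContextSet_{\alpha_0}\subseteq\bigcup_{\alpha\in A}\RMMDPContextSet_\alpha\subseteq\PmdpDimensions$, so \cref{lemma:subsetidentifier}, applied with $\RMMDPContextSet_1=\RMMDPContextSet_{\alpha_0}$ and $\RMMDPContextSet_2=\bigcup_{\alpha\in A}\RMMDPContextSet_\alpha$, gives $\bigcup_{\alpha\in A}\RMMDPContextSet_\alpha\in\RMMDPContextSetALL$. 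Because $\PmdpDimensions$ is finite there are only finitely many distinct subsets in play, so this already covers arbitrary unions; note that this direction does not even use \cref{assumption:boxshapedcontextspace}.

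I do not expect a genuine obstacle: the corollary is essentially a repackaging of \cref{lemma:subsetidentifier} and \cref{thm:intersection}. The only point worth a word of care is the degenerate empty union $\bigcup_{\alpha\in\emptyset}\RMMDPContextSet_\alpha=\emptyset$, which lies in $\RMMDPContextSetALL$ exactly when $\RMTransitionFunction(\RMCommonState,\LCmdpLabelingFunction(\LCmdpCommonState,\LCmdpCommonAction,\LCmdpCommonState'))$ is context-independent; I would therefore either restrict the union statement to nonempty subfamilies or flag this caveat explicitly.
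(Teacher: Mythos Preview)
Your proposal is correct and follows essentially the same approach as the paper: the paper's proof is a one-line appeal to \cref{lemma:subsetidentifier} for unions and \cref{thm:intersection} for finite intersections, and you have simply fleshed out those two appeals (induction for the $m$-ary intersection, the superset argument for unions) and flagged the empty-union edge case that the paper ignores.
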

\begin{proof}
Lemma \ref{lemma:subsetidentifier} and Theorem \ref{thm:intersection} guarantee that $\RMMDPContextSetALL$ is closed under unions and finite intersections, respectively.
\end{proof}
Corollary \ref{corr:arbitraryUnInter} guarantees that there is a set of identifier context parameters that is contained by every set of identifier context parameters. In Definition \ref{def:minimal}, we define a mapping that provides such a set for any transition $(\RMCommonState,\LCmdpCommonState,\LCmdpCommonAction,\LCmdpCommonState')$.
\begin{definition}
Given a product contextual MDP $\Pmdp$, we define a mapping $\RMMDPContextMapping:\RMStates \times \LCmdpStates \times \LCmdpActions \times \LCmdpStates \to 2^{\PmdpDimensions}$ such that we call $\RMMDPContextMapping(\RMCommonState,\LCmdpCommonState,\LCmdpCommonAction,\LCmdpCommonState')$ ``the smallest set of identifier context parameters on $(\RMCommonState, \LCmdpCommonState, \LCmdpCommonAction, \LCmdpCommonState')$'' if $    \RMMDPContextMapping(\RMCommonState, \LCmdpCommonState, \LCmdpCommonAction, \LCmdpCommonState') = \bigcap_{\RMMDPContextSet_i\in\RMMDPContextSetALL} \RMMDPContextSet_i,$
where $\RMMDPContextSetALL$ is the set containing all possible sets of identifier context parameters on $(\RMCommonState, \LCmdpCommonState, \LCmdpCommonAction, \LCmdpCommonState')$.
\label{def:minimal}
\end{definition}
For practical applications, the design of $\RMMDPContextMapping$ is not trivial, as one needs to separately analyze every transition in a labeled contextual MDP $\LCmdp$. On the contrary, it is trivial to work with the transitions in a reward machine $\RM$, as the number of transitions in $\RM$ is smaller than the number of transitions in $\LCmdp$ in general. Therefore, we define a set of identifier context parameters for every transition in $\RM$.
\begin{definition}
Given a product contextual MDP $\Pmdp$ and the mapping $\RMMDPContextMapping$, we define a reward machine-context mapping $\RMContextMapping:\RMStates \times \RMStates \to 2^{\PmdpDimensions}$ that outputs ``a set of identifier context parameters for the transition $(\RMCommonState,\RMCommonState')$'' as
\begin{equation}
    \RMContextMapping(\RMCommonState,\RMCommonState') = \bigcup_{\mathcal{B(\RMCommonState,\RMCommonState')}} \RMMDPContextMapping(\RMCommonState,\LCmdpCommonState,\LCmdpCommonAction,\LCmdpCommonState'),
\end{equation}
\label{def:mapping}
where $ \mathcal{B(\RMCommonState,\RMCommonState')}=\{(\RMCommonState,\LCmdpCommonState,\LCmdpCommonAction,\LCmdpCommonState')\in\RMStates\times\LCmdpStates\times\LCmdpActions\times\LCmdpStates\:|\RMTransitionFunction(\RMCommonState,\LCmdpLabelingFunction(\LCmdpCommonState,\LCmdpCommonAction,\LCmdpCommonState'))=\RMCommonState'\:\text{for some}\: \context\in\LCmdpContextSpace\}.$
\end{definition}
\begin{theorem}
$\RMContextMapping(\RMCommonState, \RMCommonState')$ is the smallest set that is a set of identifier context parameters for all $(\RMCommonState,\LCmdpCommonState,\LCmdpCommonAction,\LCmdpCommonState')\in\mathcal{B(\RMCommonState,\RMCommonState')}$.
\label{thm:smallestRMContextMapping}
\end{theorem}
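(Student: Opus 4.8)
The plan is to prove two things: (i) $\RMContextMapping(\RMCommonState,\RMCommonState')$ \emph{is} a set of identifier context parameters on every $(\RMCommonState,\LCmdpCommonState,\LCmdpCommonAction,\LCmdpCommonState')\in\mathcal{B}(\RMCommonState,\RMCommonState')$, and (ii) it is contained in any other set with that property, i.e., it is the smallest. For (i), fix $(\RMCommonState,\LCmdpCommonState,\LCmdpCommonAction,\LCmdpCommonState')\in\mathcal{B}(\RMCommonState,\RMCommonState')$. By \cref{def:minimal}, $\RMMDPContextMapping(\RMCommonState,\LCmdpCommonState,\LCmdpCommonAction,\LCmdpCommonState')$ is a set of identifier context parameters on that transition (it is the intersection of all such sets, and by \cref{corr:arbitraryUnInter} that intersection is itself in $\RMMDPContextSetALL$). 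Since $\RMMDPContextMapping(\RMCommonState,\LCmdpCommonState,\LCmdpCommonAction,\LCmdpCommonState')\subseteq\RMContextMapping(\RMCommonState,\RMCommonState')$ by the union in \cref{def:mapping}, \cref{lemma:subsetidentifier} upgrades this: $\RMContextMapping(\RMCommonState,\RMCommonState')$ is also a set of identifier context parameters on $(\RMCommonState,\LCmdpCommonState,\LCmdpCommonAction,\LCmdpCommonState')$. As the transition was arbitrary in $\mathcal{B}(\RMCommonState,\RMCommonState')$, this establishes the ``is a set of identifier context parameters for all'' part.

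For (ii), let $\RMMDPContextSet$ be any set that is a set of identifier context parameters for \emph{all} $(\RMCommonState,\LCmdpCommonState,\LCmdpCommonAction,\LCmdpCommonState')\in\mathcal{B}(\RMCommonState,\RMCommonState')$. We must show $\RMContextMapping(\RMCommonState,\RMCommonState')\subseteq\RMMDPContextSet$. Fix any transition $(\RMCommonState,\LCmdpCommonState,\LCmdpCommonAction,\LCmdpCommonState')\in\mathcal{B}(\RMCommonState,\RMCommonState')$; then $\RMMDPContextSet$ is a set of identifier context parameters on it, so $\RMMDPContextSet\in\RMMDPContextSetALL$ for that transition, hence by \cref{def:minimal} the minimal set satisfies $\RMMDPContextMapping(\RMCommonState,\LCmdpCommonState,\LCmdpCommonAction,\LCmdpCommonState')\subseteq\RMMDPContextSet$. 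Taking the union over all transitions in $\mathcal{B}(\RMCommonState,\RMCommonState')$ gives $\RMContextMapping(\RMCommonState,\RMCommonState')=\bigcup_{\mathcal{B}(\RMCommonState,\RMCommonState')}\RMMDPContextMapping(\RMCommonState,\LCmdpCommonState,\LCmdpCommonAction,\LCmdpCommonState')\subseteq\RMMDPContextSet$, as desired. Combining (i) and (ii) yields the claim.

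One subtlety worth flagging: the statement says ``the smallest set that is a set of identifier context parameters for all $(\RMCommonState,\LCmdpCommonState,\LCmdpCommonAction,\LCmdpCommonState')\in\mathcal{B}(\RMCommonState,\RMCommonState')$,'' so I should make sure the notion of ``identifier for all transitions in $\mathcal{B}$'' is handled cleanly — a single set $\RMMDPContextSet$ serving simultaneously for every transition in $\mathcal{B}(\RMCommonState,\RMCommonState')$. The argument above does exactly this because the ``for all'' quantifier commutes: being an identifier for all of $\mathcal{B}$ means being an identifier for each member, and the union construction is precisely the join in the lattice of such sets. I would also note for completeness that $\mathcal{B}(\RMCommonState,\RMCommonState')$ being empty is a degenerate case where $\RMContextMapping(\RMCommonState,\RMCommonState')=\emptyset$ vacuously, consistent with the remark after \cref{lemma:subsetidentifier}.

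The main obstacle is not any deep mathematical difficulty — the result is essentially a lattice-theoretic bookkeeping fact once \cref{lemma:subsetidentifier} (upward closure) and \cref{corr:arbitraryUnInter} (the minimal set $\RMMDPContextMapping$ is well-defined and itself an identifier) are in hand. The care required is purely in keeping the quantifier over $\mathcal{B}(\RMCommonState,\RMCommonState')$ straight and in invoking \cref{lemma:subsetidentifier} in the right direction (to go \emph{up} from $\RMMDPContextMapping$ to $\RMContextMapping$, and to conclude minimality we go \emph{down} from an arbitrary identifier to the $\RMMDPContextMapping$ of each transition). Making explicit that $\RMMDPContextMapping(\RMCommonState,\LCmdpCommonState,\LCmdpCommonAction,\LCmdpCommonState')\in\RMMDPContextSetALL$ (i.e., the intersection defining it is attained), which relies on \cref{corr:arbitraryUnInter}, is the one place where an omission would leave a gap.
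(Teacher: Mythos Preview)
Your proof is correct and follows essentially the same two-part strategy as the paper's: first showing $\RMContextMapping(\RMCommonState,\RMCommonState')$ is an identifier for every transition in $\mathcal{B}(\RMCommonState,\RMCommonState')$ via upward closure from each $\RMMDPContextMapping$, and then showing minimality because any common identifier must contain each $\RMMDPContextMapping$ and hence their union. Your version is in fact more carefully justified than the paper's terse proof---you make explicit the use of \cref{lemma:subsetidentifier} for the upward step and the role of \cref{corr:arbitraryUnInter} in ensuring $\RMMDPContextMapping(\RMCommonState,\LCmdpCommonState,\LCmdpCommonAction,\LCmdpCommonState')$ is itself an identifier, whereas the paper glosses these as ``by \cref{corr:arbitraryUnInter}'' and ``by construction.''
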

\begin{proof}
By Corollary \ref{corr:arbitraryUnInter}, $\RMContextMapping(\RMCommonState, \RMCommonState')$ is a set of identifier context parameters for all $(\RMCommonState,\LCmdpCommonState,\LCmdpCommonAction,\LCmdpCommonState')\in\mathcal{B(\RMCommonState,\RMCommonState')}$. 
Also, a set $\mathcal{U}$ that is guaranteed to be a set of identifier context parameters for all $(\RMCommonState,\LCmdpCommonState,\LCmdpCommonAction,\LCmdpCommonState')\in\mathcal{B(\RMCommonState,\RMCommonState')}$ must contain $\RMContextMapping(\RMCommonState, \RMCommonState')$ by construction. Then, $|\mathcal{U}|\geq|\RMContextMapping(\RMCommonState, \RMCommonState')|$.
\end{proof}
An expert designs mapping $\RMContextMapping$ by asking questions about the task structure. 
For instance, for the transition ($\RMCommonState_1,\RMCommonState_5$) in \cref{fig:RM_two_door}, the expert should ask: 
Is there a transition ($\LCmdpCommonState,\LCmdpCommonAction,\LCmdpCommonState’$) in the labeled contextual MDP $\LCmdp$ such that it causes the agent to hit the wall for some context $\context$ but lets the agent pass through the door, i.e., ($\RMCommonState_1,\RMCommonState_2$), for a different context $\context’$? 
The idea is to find the context parameters $i\in\LCmdpDimensions$ for which a change of value, e.g. $\context[i] \neq \context’[i]$, prevents a transition $(\RMCommonState,\RMCommonState’)$ in the reward machine from happening. 
For ($\RMCommonState_1,\RMCommonState_5$), the mapping outputs the first context parameter, $\RMContextMapping(\RMCommonState_1,\RMCommonState_5)=\{1\}$, as the identifier, since it determines the position of the first door.
In short, when the agent is in the second room and moves into the first door/wall with an upward action, then the position of the first door determines whether it moves into the door or the wall. 
Here, the position of the second door does not identify which transition will happen.

\subsection{Reward-Machine-Guided, Self-Paced Reinforcement Learning}
\citet{NEURIPS2020_68a97503}'s self-paced RL algorithm uses an importance weight in (\ref{eq:SPRL_importance_reward}) as the ratio between probabilities of a context with respect to the current and previous contexts distributions. In other words, the algorithm assumes that every context parameter has an effect on the reward that an environment interaction yields. On the other hand, by \cref{thm:smallestRMContextMapping}, a reward machine-context mapping  $\RMContextMapping$ outputs the smallest set of identifier context parameters for a transition $(\RMCommonState, \RMCommonState')$ in the reward machine $\RM$. Therefore, we can remove the naive assumption of \citet{NEURIPS2020_68a97503} and use the context parameters that the mapping provides to compute the importance weight of a reward received in a transition $(\RMCommonState, \RMCommonState')$. We achieve this by utilizing the marginal context distributions for the context parameters in the set $\RMContextMapping(\RMCommonState, \RMCommonState')$ as
$\frac{1}{\NumberOfRollouts} \sum_{i=1}^{\NumberOfRollouts} \sum_{t=0}^{T_i-1} \CmdpDiscount^t \frac{\ContextDistribution_{\RMContextMappingCommonOutput_t}(\context_i|\ContextDistributionParameter_{k})}{\ContextDistribution_{\RMContextMappingCommonOutput_t}(\context_i|\ContextDistributionParameter_{k-1})} \PmdpReward_{i, t+1},$
where $\PmdpReward_{i,t}=\PmdpRewardFunction[\context_i](\PmdpCommonState_{i,t-1},\PmdpCommonAction_{i,t-1},\PmdpCommonState_{i,t})$ and $\RMContextMappingCommonOutput_t=\RMContextMapping(\RMCommonState_t,\RMCommonState_{t+1})$. 
Here, we introduce  $\ContextDistribution_{\RMContextMappingCommonOutput_t}(\cdot|\ContextDistributionParameter_{k})$ and $\ContextDistribution_{\RMContextMappingCommonOutput_t}(\cdot|\ContextDistributionParameter_{k-1})$, that are the current and previous marginal context distributions, where the marginal variables are the identifier context parameters in $\RMContextMappingCommonOutput_t$, respectively. 
We note that for the case $\RMContextMappingCommonOutput_t=\emptyset$, we assign $\frac{\ContextDistribution_{\RMContextMappingCommonOutput_t}(\context_i|\ContextDistributionParameter_{k})}{\ContextDistribution_{\RMContextMappingCommonOutput_t}(\context_i|\ContextDistributionParameter_{k-1})}=1$. 
Consequently, the reward-machine-guided, self-paced RL algorithm optimizes the following objective for context distribution updates, namely,
\begin{align}
    \max_{\ContextDistributionParameter_{k}} \quad & \frac{1}{\NumberOfRollouts} \sum_{i=1}^{\NumberOfRollouts} \sum_{t=0}^{T_i-1} \CmdpDiscount^t \frac{\ContextDistribution_{\RMContextMappingCommonOutput_t}(\context_i|\ContextDistributionParameter_{k})}{\ContextDistribution_{\RMContextMappingCommonOutput_t}(\context_i|\ContextDistributionParameter_{k-1})} \PmdpReward_{i, t+1} \nonumber \\ 
    & - \KLCoefficient_{k} \KLDivergence(\ContextDistribution(\context|\ContextDistributionParameter_{k})\:||\:\TargetContextDistribution(\context)) \nonumber \\
    \textrm{s.t.} \quad &  \KLDivergence(\ContextDistribution(\context|\ContextDistributionParameter_{k})\:||\:\ContextDistribution(\context|\ContextDistributionParameter_{k-1})) \leq \RelativeEntropyBound,
    \label{eq:RM_guided_SPRL_objective}
\end{align}
where $\KLCoefficient_{k}$ is the regularization coefficient at the context distribution update $k$. Similar to the intermediate self-paced RL, the reward-machine-guided, self-paced RL algorithm runs on a product contextual MDP $\Pmdp$, as well. We outline the complete algorithm in \cref{alg:RM_guided_SPRL}. Lines 3-5 update the policy $\LCmdpPolicy$ using trajectories in the sampled contexts via an RL algorithm $\RLAlgorithm$. Line 6 generates context distributions.
\begin{algorithm}[tbp]
\caption{Reward-Machine-Guided, Self-Paced RL}
\label{alg:RM_guided_SPRL}
\textbf{Input}: Product MDP $\Pmdp$, reward machine-context mapping $\RMContextMapping$, target context distribution $\TargetContextDistribution$, initial context distribution $\ContextDistribution(\cdot|\ContextDistributionParameter_{0})$,\\
\textbf{Parameter}: KL penalty proportion $\KLPenaltyProportion$, relative entropy bound $\RelativeEntropyBound$,  KL penalty offset offset $\KLPenaltyOffset$, number $\NumberOfIterations$ of iterations, number $\NumberOfRollouts$ of rollouts\\
\textbf{Output}: Final policy $\PmdpPolicy_{\PolicyParameter_{\NumberOfIterations}}$
\begin{algorithmic}[1] 
    \STATE Initialize policy $\PmdpPolicy_{\PolicyParameter_{0}}$.
    \FOR{$k=1$ to $\NumberOfIterations$}
    \STATE $\context_i\sim\ContextDistribution(\context|\ContextDistributionParameter_{k-1})$, $i\in[\NumberOfRollouts]$,
    \COMMENT{sample contexts}
    \STATE $\SPRLTrajectorySet_{k}\leftarrow\{(\context_i, \PmdpTrajectory_i) | \PmdpTrajectory_i = (\PmdpCommonState_{i,0},\PmdpCommonAction_{i,0},\PmdpReward_{i,1},\PmdpCommonState_{i,1}),\cdots,$ $(\PmdpCommonState_{i,T_i-1},\PmdpCommonAction_{i,T_i-1},\PmdpReward_{i,T_i},\PmdpCommonState_{i,T_i}),i\in[\NumberOfRollouts]\}$,
    \COMMENT{collect trajectories}
    \STATE $\PmdpPolicy_{\PolicyParameter_{k}}\leftarrow \Psi(\SPRLTrajectorySet_{k},\PmdpPolicy_{\PolicyParameter_{k-1}})$
    \COMMENT{update policy with RL algorithm $\RLAlgorithm$}
    \STATE Compute next context distribution parameter $\ContextDistributionParameter_{k}$ by optimizing (\ref{eq:RM_guided_SPRL_objective}) with \\
    $\KLCoefficient_{k}=\begin{cases}
        $0$ & \text{if $k \leq \KLPenaltyOffset$}; \\
        \mathsf{B}(\ContextDistributionParameter_{k-1},\SPRLTrajectorySet_{k}) & \text{otherwise},
        \end{cases}$\\
    where $\mathsf{B}(\ContextDistributionParameter_{k-1},\SPRLTrajectorySet_{k}) = \KLPenaltyProportion \frac{\max{(0, \frac{1}{\NumberOfRollouts}\sum_{i=1}^{\NumberOfRollouts}\sum_{t=1}^{T_i} \PmdpDiscount^t \PmdpReward_{i, t})}}{\KLDivergence(\ContextDistribution(\context|\ContextDistributionParameter_{k-1})||\TargetContextDistribution(\context))}$.
    \ENDFOR
    \STATE \textbf{return} $\PmdpPolicy_{\PolicyParameter_{\NumberOfIterations}}$
\end{algorithmic}
\end{algorithm}

\section{Empirical Results}

We evaluate the proposed RM-guided SPRL and Intermediate SPRL with three state-of-the-art automated curriculum generation methods\footnote{See \url{https://github.com/cevahir-koprulu/rm-guided-sprl} to access the code repository of this work.}: SPDL \citep{NEURIPS2020_68a97503}, GoalGAN \citep{florensa2018automatic}, and ALP-GMM \citep{portelas2020teacher}. We also include two baseline approaches: Default, which draws samples from the target context distribution without generating a curriculum, and Default*, which extends Default by running the RL algorithm on a product contextual MDP, hence we observe the effect of capturing temporal abstractions. Appendix~B includes more details.

\paragraph{Two-door environment.} The two-door environment is a variation of the point-mass environment \cite{klink2020self_SPCRL,NEURIPS2020_68a97503,klink2021probabilistic,klink2022curriculum} with a temporal structure. Similar to commonly studied domains such as the office world, craft world, and water world \cite{icarte2018using,camacho2019ltl,icarte2022reward}, the two-door environment has discrete state and action spaces as a 40-by-40 grid world. The context space $\PmdpContextSpace=[-4,4]^2$ includes all available horizontal positions for two doors. The target context distribution $\TargetContextDistribution(\context)$ is a normal distribution $\mathcal{N}(\mu,\Sigma)$, where $\mu=(\mu_1, \mu_2)=(2,2)$ and $\Sigma=diag((\sigma_1^2, \sigma_2^2))=\boldsymbol{I}_2$, where $\boldsymbol{I}_2$ is the identity matrix. 

\cref{fig:twodoordiscrete2D_curriculum} demonstrates how the curricula generated by each method evolve over the training. Here, we exclude GoalGAN, Default*, and Default, since they do not have a notion of a target context distribution. RM-guided SPRL produces sequences of context distributions that vary less over the same curriculum updates and converges faster, by one-fourth, than Intermediate SPRL and SPDL. Table \ref{tab:curricula_variance} demonstrates the average variance of the statistics of the context distributions, i.e., mean and variance of normal distributions, generated until convergence to the target distribution. RM-guided SPRL has the lowest variances for all statistics, up-to four orders of magnitude, with statistical significance $p<0.001$. Intuitively, guiding the curriculum generation process via a reward-machine-context mapping $\RMContextMapping$ allows for avoiding redundant exploration of the curriculum space.

\begin{figure}[t]
\centering
    \begin{subfigure}{.8\linewidth}
    \centering
    \includegraphics[width=\linewidth]{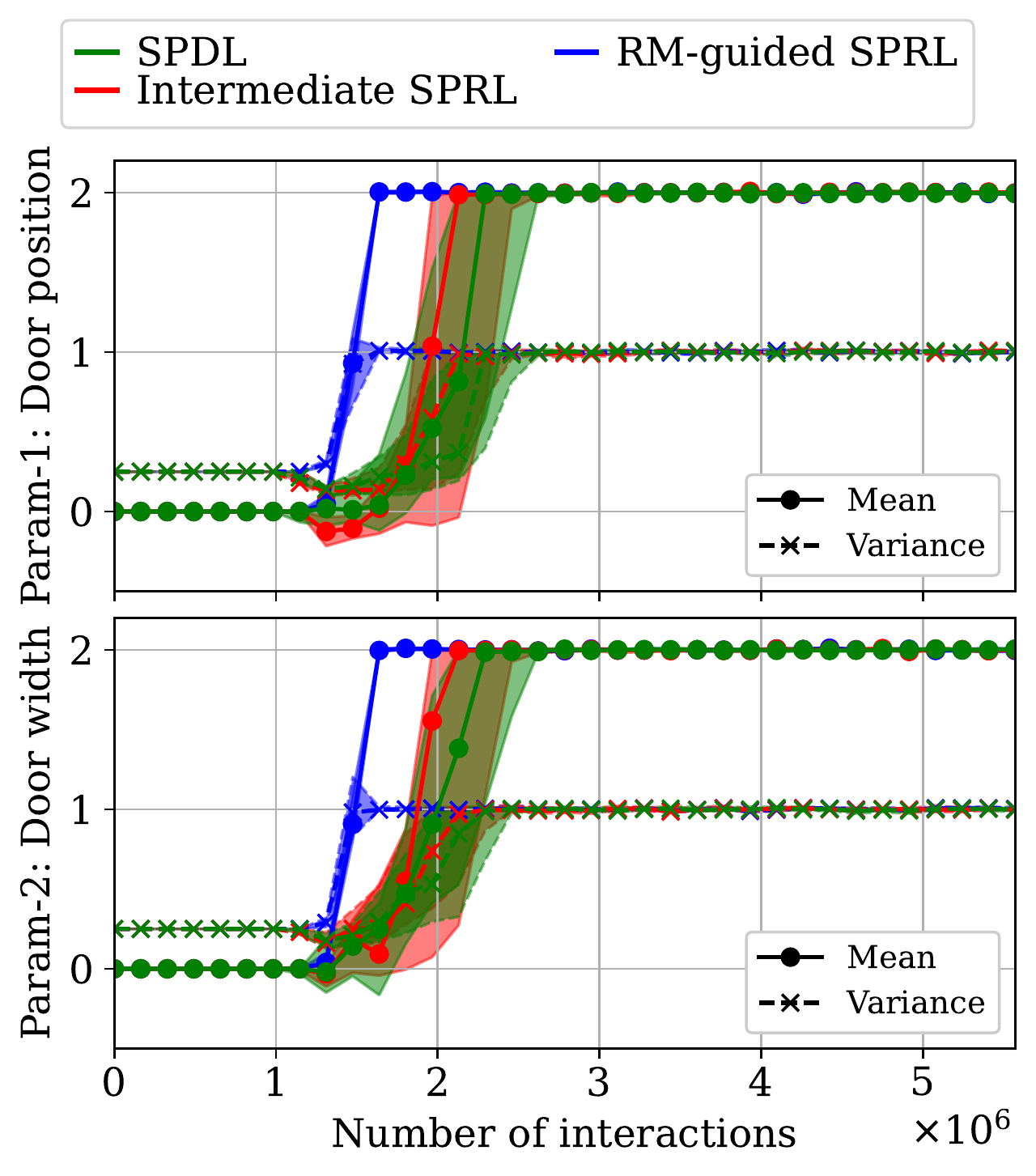}
    \caption{Progression of the statistics (mean and variance) of context distributions generated in the curriculum. 
    }
    \label{fig:twodoordiscrete2D_curriculum}
    \end{subfigure}
~
    \begin{subfigure}{.95\linewidth}
    \centering
    \includegraphics[width=\linewidth]{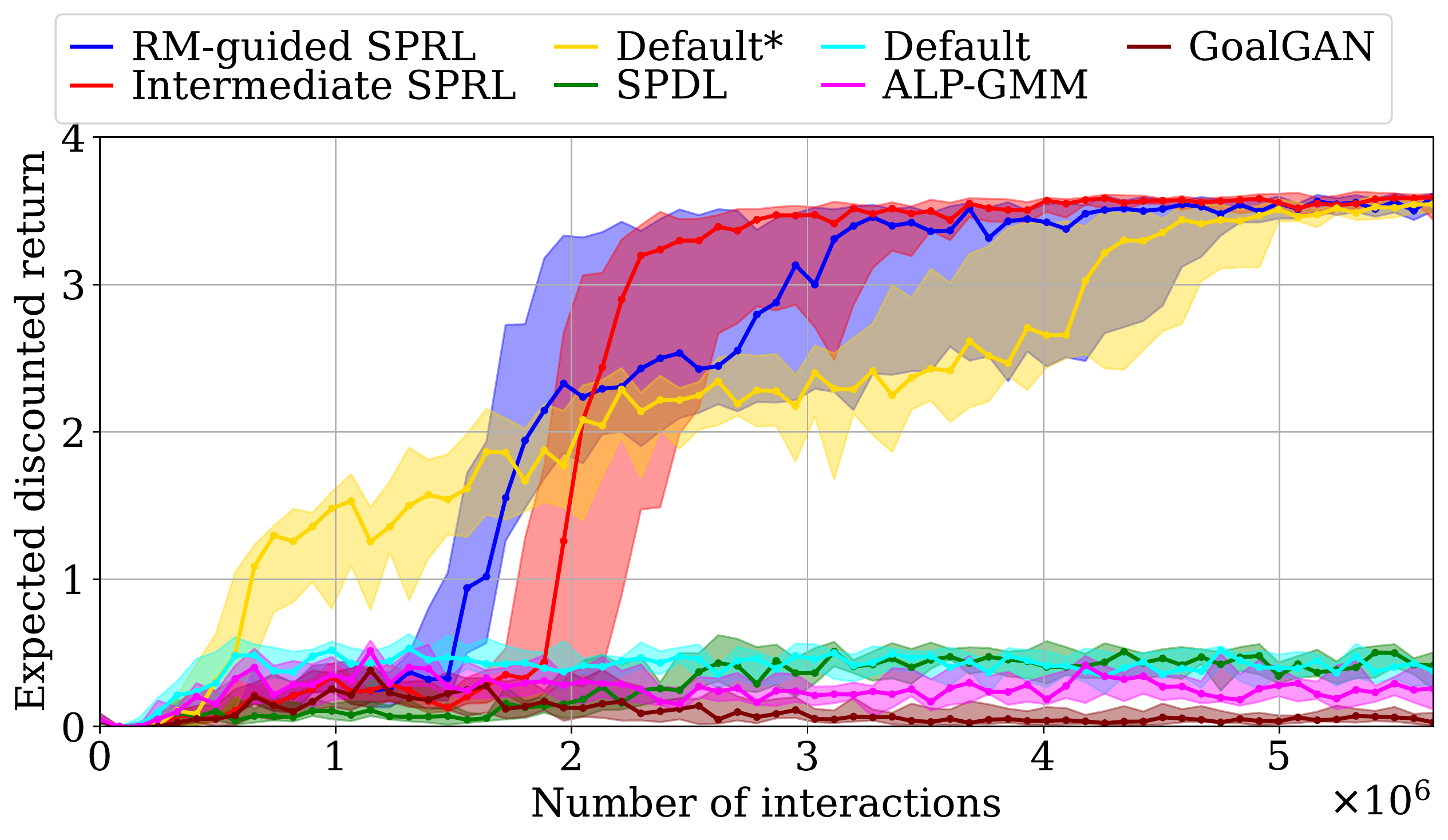}
    \caption{Progression of the expected discounted return with respect to the target context distribution.
    }
    \label{fig:twodoordiscrete2D_performance}
    \end{subfigure}
\caption{Two-door environment: Progression of the curricula and performance during the training. Shaded regions cover the quartiles. Bold lines indicate the median values.}
\end{figure}

\cref{fig:twodoordiscrete2D_performance} shows the expected discounted return progression. Although Default* obtains a higher return early on, it lags behind RM-guided SPRL and Intermediate SPRL since it does not generate a curriculum, but samples from the target context distribution directly. As we set $\KLPenaltyOffset$ to 70 for all self-paced algorithms, the agent draws easy contexts from the initial context distribution $\ContextDistribution(\cdot|\ContextDistributionParameter_{0})$. RM-guided SPRL surpasses Default* quickly, but the agent seems to get stuck in the final phase, finding the goal. Intermediate SPRL do not experience this as its curricula converge later (see red lines in \cref{fig:twodoordiscrete2D_curriculum}). Other approaches cannot learn a policy that accomplishes the task because they do not capture the temporal structure described by the reward machine.

\begin{table}[t]
\centering
\caption{Curricula variance: Average variance of the statistics of the context distributions generated by self-paced RL methods in three case studies. $\mu_i$ and $\sigma_i^2$ correspond to the statistics of a normal distribution, i.e., the mean and the variance for the $i^{th}$ context parameter, respectively. The variances that are highlighted in \textrm{bold} are significantly better results (lower variance of a statistic) with $p<0.001$.}
\begin{tabular}{ccccc}\toprule
 &Stat & RM-guided & Intermediate & SPDL \\ \midrule
   \multirow{4}{*}{\rotatebox{90}{Two-door}}
   &$\mu_{1}$      & $\boldsymbol{4.09 \cdot 10^{-3}}$  & $2.06 \cdot 10^{-1}$  & $1.21  \cdot 10^{-1}$ \\
   &$\mu_{2}$      & $\boldsymbol{4.61 \cdot 10^{-3}}$  & $1.69 \cdot 10^{-1}$  & $8,78  \cdot 10^{-2}$ \\
   &$\sigma_{1}^2$ & $\boldsymbol{2.23 \cdot 10^{-3}}$  & $3.75 \cdot 10^{-2}$  & $2.46  \cdot 10^{-2}$ \\
   &$\sigma_{2}^2$ & $\boldsymbol{2.31 \cdot 10^{-3}}$  & $2.59 \cdot 10^{-2}$  & $2.42  \cdot 10^{-2}$ \\ \midrule

   \multirow{4}{*}{\rotatebox{90}{Swimmer}}
   &$\mu_{1}$      & $3.02 \cdot 10^{-4}$  & $8.18 \cdot 10^{-4}$  & $2.37 \cdot 10^{-2}$ \\
   &$\mu_{2}$      & $5.07 \cdot 10^{-4}$  & $1.13 \cdot 10^{-4}$  & $2.90 \cdot 10^{-2}$ \\
   &$\sigma_{1}^2$ & $2.83 \cdot 10^{-6}$  & $9.55 \cdot 10^{-7}$  & $8.40 \cdot 10^{-6}$ \\
   &$\sigma_{2}^2$ & $2.68 \cdot 10^{-6}$  & $2.77 \cdot 10^{-6}$  & $6.16 \cdot 10^{-5}$ \\ \midrule
    
   \multirow{6}{*}{\rotatebox{90}{HalfCheetah}}
   &$\mu_{1}$      & $\boldsymbol{2.90 \cdot 10^{-2}}$  & $6.23 \cdot 10^{-1}$  & - \\
   &$\mu_{2}$      & $\boldsymbol{2.39 \cdot 10^{-2}}$  & $9.84 \cdot 10^{-1}$  & - \\
   &$\mu_{3}$      & $\boldsymbol{9.85 \cdot 10^{-2}}$  & $1.27 \cdot 10^{-1}$  & - \\
   &$\sigma_{1}^2$ & $\boldsymbol{9.13 \cdot 10^{-4}}$  & $5.80 \cdot 10^{-3}$  & - \\
   &$\sigma_{2}^2$ & $1.92             \cdot 10^{-3}$   & $1.95 \cdot 10^{-4}$  & - \\
   &$\sigma_{3}^2$ & $\boldsymbol{2.86 \cdot 10^{-3}}$  & $5.35 \cdot 10^{-2}$  & - \\ \bottomrule
\end{tabular}
\label{tab:curricula_variance}
\end{table}

\paragraph{Customized Swimmer-v3 environment.}

\begin{figure}[t]
    \centering
    \begin{tikzpicture}[
    shorten >=.2pt,node distance=2.1
    cm,on grid,auto, align=center,
    state/.style={circle, draw, minimum size=.5cm}
] 
        \node[state, initial] (q0) {$\RMCommonState_0$};
        \node[state, right of=q0] (q1) {$\RMCommonState_1$};
        \node[state, accepting, right of=q1] (q2) {$\RMCommonState_2$};
        
        \draw (q0) edge node {\tt $(\SwimmerFlagOne, 100)$}(q1);
        \draw (q1) edge node{\tt $(\SwimmerFlagTwo , 1000)$}(q2);

        \draw (q0) edge[loop above] node {\tt$(\neg \SwimmerFlagOne, \text{CP})$} (q0);
        \draw (q1) edge[loop above] node {\tt$(\neg \SwimmerFlagTwo, \text{CP})$} (q1);
        \draw (q2) edge[loop above] node {\tt$(\top, 0)$} (q2);
    \end{tikzpicture}
        \caption{Reward machine of the Swimmer environment}
        \label{fig:RM_swimmer}
\end{figure}
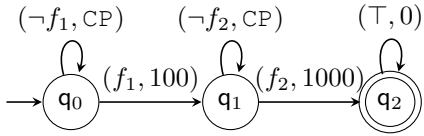

\begin{figure}[t]
\centering
\includegraphics[width=.95\linewidth]{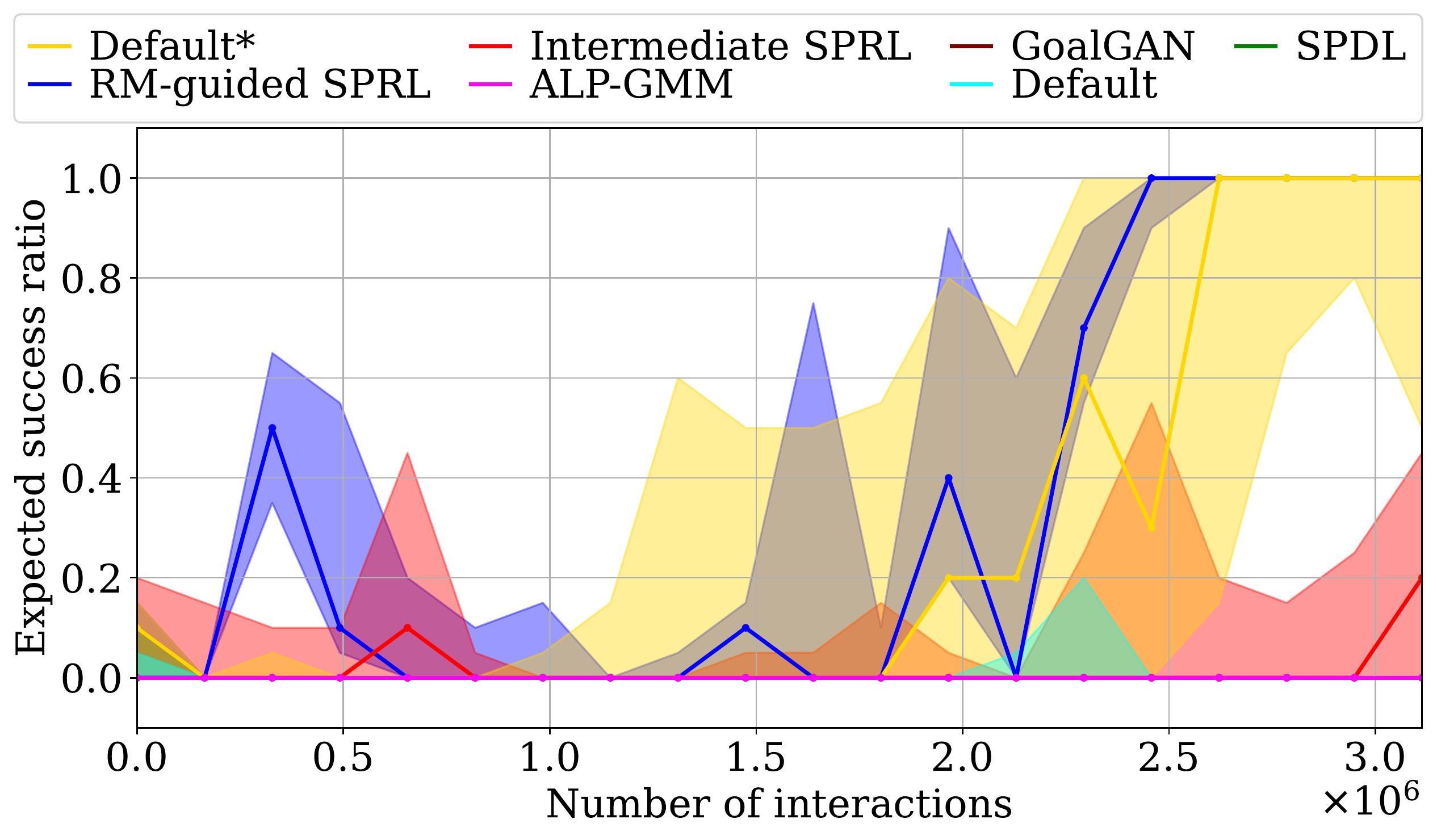}
\caption{Swimmer-v3 environment: Progression of the successful episodes ratio in contexts drawn from the target context distribution over curriculum updates.}
\label{fig:swimmer2D_performance}
\end{figure}

We customize the Swimmer-v3 environment \cite{1606.01540} by adding two flags, namely checkpoints, to the left, $\SwimmerFlagOne$, and the right, $\SwimmerFlagTwo$, of the initial position of the swimmer. In comparison to the two-door environment, customized Swimmer-v3 has continuous state (8-dimensional) and action (2-dimensional) spaces. However, the underlying task is the simplest in our experimental setup, with a reward machine of 3 states (see \cref{fig:RM_swimmer}): The swimmer has to visit flag 2 $\SwimmerFlagTwo$ first, and then flag 1, obtaining a reward of 100 and 1000, respectively. Inspired by \citet{icarte2022reward}, we use a control penalty, noted as $\text{CP}$, for rewards received by the agent following the self-loop transitions in the reward machine, to discourage the agent from applying large forces to the joints. The context space is 2-dimensional and determines the positions of the flags: $\LCmdpContextSpace_2=[-0.6,0]\times[1,1.6]$. The target context distribution is $\mathcal{N}((0.6,1.6,\boldsymbol{I}_3\cdot1.6\cdot 10^{-7})$.

\cref{fig:swimmer2D_performance} shows that only RM-guided-SPRL and Default* achieve 100\% success ratio in median. RM-guided SPRL converges faster, and is more reliable as the quartiles converge before the training ends, as well. Default*'s performance evidence that this task does not require a curriculum as much as the two-door environment. Intermediate SPRL's failure supports this argument, as it cannot achieve a success ratio of more than 20\%, in the median. The other algorithms, again, fail to accomplish the task. \cref{tab:curricula_variance} indicates that the curricula variance of RM-guided SPRL is not significantly different than Intermediate SPRL. Nevertheless, RM-guided SPRL is reliable as it is 100\% successful (median) while avoiding the redundant exploration of the curriculum space.

\paragraph{Customized HalfCheetah-v3 environment.}
\begin{figure}[t]
    \centering
    \includegraphics[width=.95\linewidth]{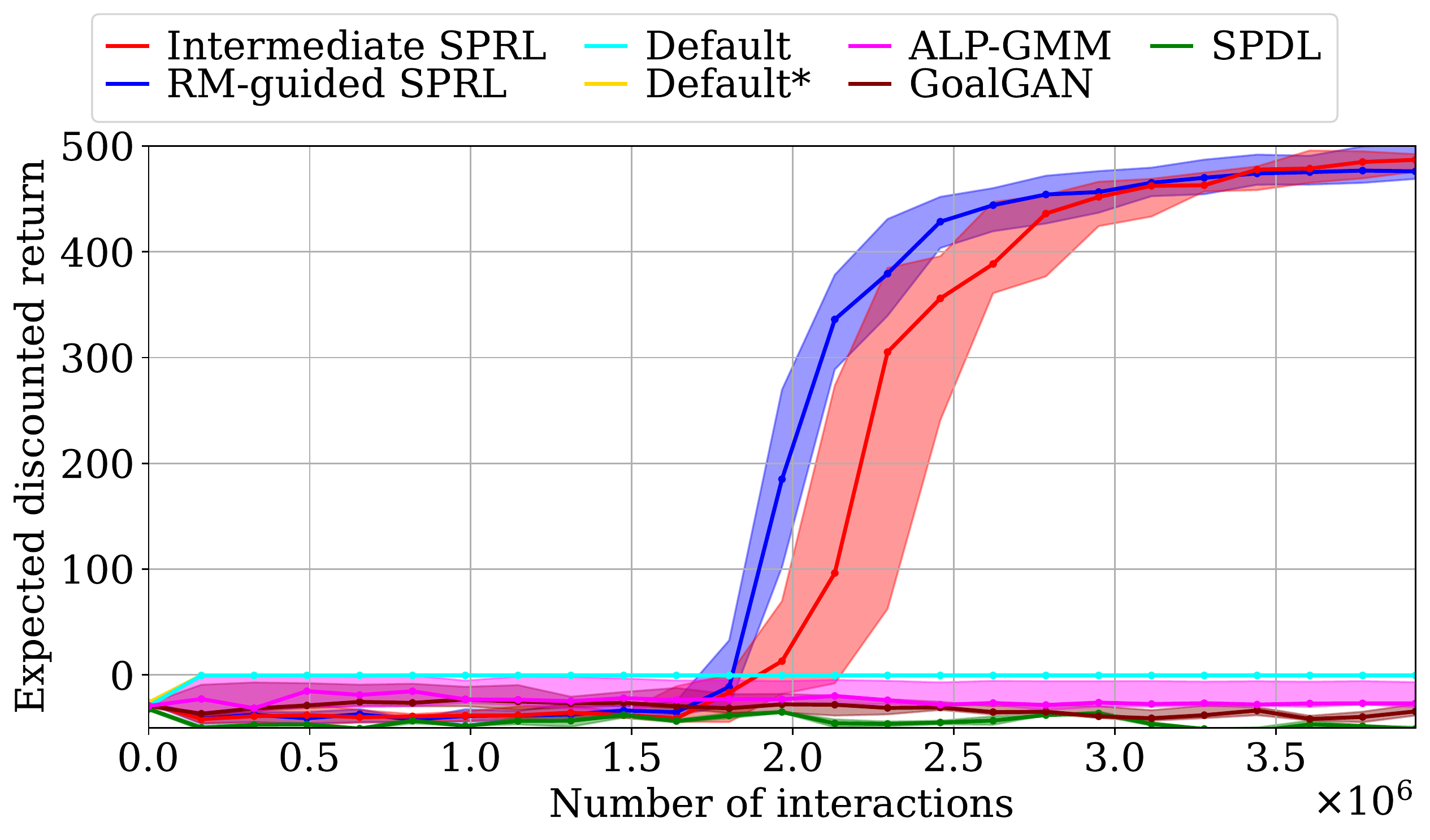}
    \caption{HalfCheetah-v3 environment: Progression of the expected discounted return with respect to the target context distribution over curriculum updates.}
    \label{fig:halfcheetah3D_perf}
\end{figure}

We also customize the HalfCheetah-v3 environment \cite{1606.01540} by adding three flags, $\CheetahFlagOne$, $\CheetahFlagTwo$, and $\CheetahFlagThree$, ordered in ascending distance to the right of the cheetah's initial position. The reward machine in \cref{fig:RM_half_cheetah} describes the following task: The cheetah has to visit flags 1 and 2, then go back to flag 1, and finally pass flag 3. The underlying task requires the cheetah to change direction 3 times. The original HalfCheetah-v3  \cite{1606.01540} and its variation in \cite{icarte2022reward} are single-task environments and reward the cheetah for running forward, only. In comparison, customized HalfCheetah-v3 has a running backward subtask (the transition ($\RMCommonState_2,\RMCommonState_3$) in the reward machine in Figure \cref{fig:RM_half_cheetah}), which is challenging for the agent. Similar to customized Swimmer-v3, customized HalfCheetah-v3 has continuous state (17-dimensional) and action (6-dimensional) spaces. The 3-dimensional context space determines the flag positions: $\LCmdpContextSpace_3=[0.5,4]\times[2,7]\times[3.5,10]$. The target context distribution is $\mathcal{N}((4,7,10),\boldsymbol{I}_3\cdot1.6\cdot 10^{-7})$.

\cref{fig:halfcheetah3D_perf} shows that RM-guided SPRL is the only algorithm that can learn a policy that accomplishes the target contexts in every independent training run. Intermediate SPRL fails in one run, where the generated curricula cannot converge to the target context distribution during the training (see Appendix~B). SPDL suffers from a similar issue because they both obtain a negative expected discounted return, which sets $\KLCoefficient_{k}$ in (\ref{eq:RM_guided_SPRL_objective}) to zero, hence the generated curricula do not approach, even diverge from, the target context distribution. Default*, Default, GoalGAN, and ALP-GMM are unsuccessful in this domain. In Table \ref{tab:curricula_variance}, we exclude SPDL, as none of its curricula converge to the target before the training ends. Similar to Case-2, RM-guided SPRL generates curricula whose variance is significantly smaller with $p<0.001$.

\begin{figure}[t]
    \centering
    \begin{tikzpicture}[
    shorten >=.1pt,node distance=2.
    cm,on grid,auto, align=center,
    state/.style={circle, draw, minimum size=.2cm}
] 
        \node[state, initial] (q0) {$\RMCommonState_0$};
        \node[state, right of=q0] (q1) {$\RMCommonState_1$};
        \node[state, right of=q1] (q2) {$\RMCommonState_2$};
        \node[state, below of=q2] (q3) {$\RMCommonState_3$};
        \node[state, accepting, left of=q3] (q4) {$\RMCommonState_4$};
        
        \draw (q0) edge node {\tt $(\CheetahFlagOne, 10)$}(q1);
        \draw (q1) edge node{\tt $(\CheetahFlagTwo , 100)$}(q2);
        \draw (q2) edge node {\tt $(\CheetahFlagOne, 500)$}(q3);
        \draw (q3) edge node {\tt $(\CheetahFlagThree, 1000)$}(q4);

        \draw (q0) edge[loop above] node {\tt$(\neg \CheetahFlagOne, \text{CP})$} (q0);
        \draw (q1) edge[loop above] node {\tt$(\neg \CheetahFlagTwo, \text{CP})$} (q1);
        \draw (q2) edge[loop above] node {\tt$(\neg \CheetahFlagOne, \text{CP})$} (q2);
        \draw (q3) edge[loop right] node {\tt$(\neg \CheetahFlagThree, \text{CP})$} (q3);
        \draw (q4) edge[loop above] node {\tt$(\top, 0)$} (q4);
    \end{tikzpicture}
        \caption{Reward machine of the HalfCheetah environment}
        \label{fig:RM_half_cheetah}
\end{figure}
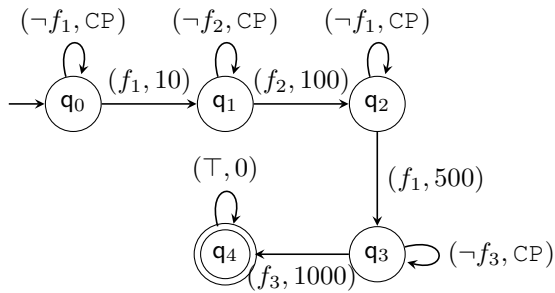

\begin{figure}[t]
\centering
    \begin{subfigure}{.45\linewidth}
    \centering
    \includegraphics[width=\linewidth]{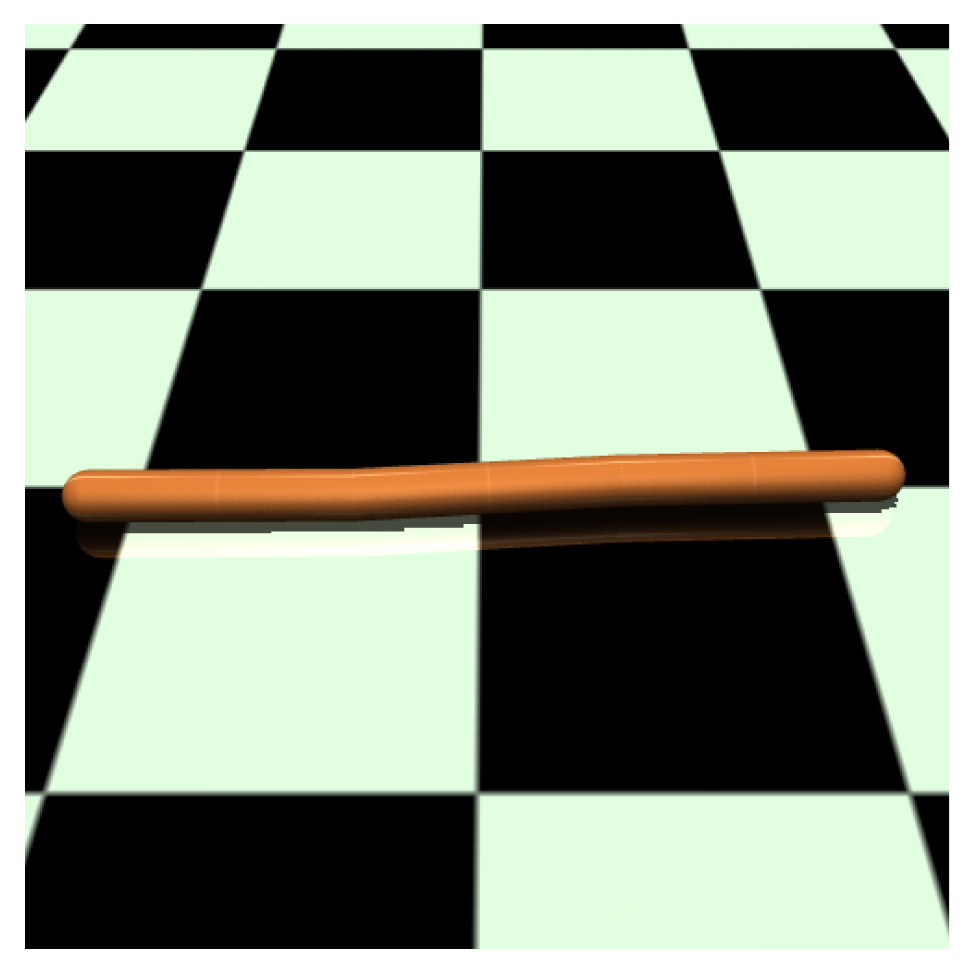}
    \caption{Swimmer-v3}
    \label{fig:Swimmer-v3}
    \end{subfigure}
~
    \begin{subfigure}{.45\linewidth}
    \centering
    \includegraphics[width=\linewidth]{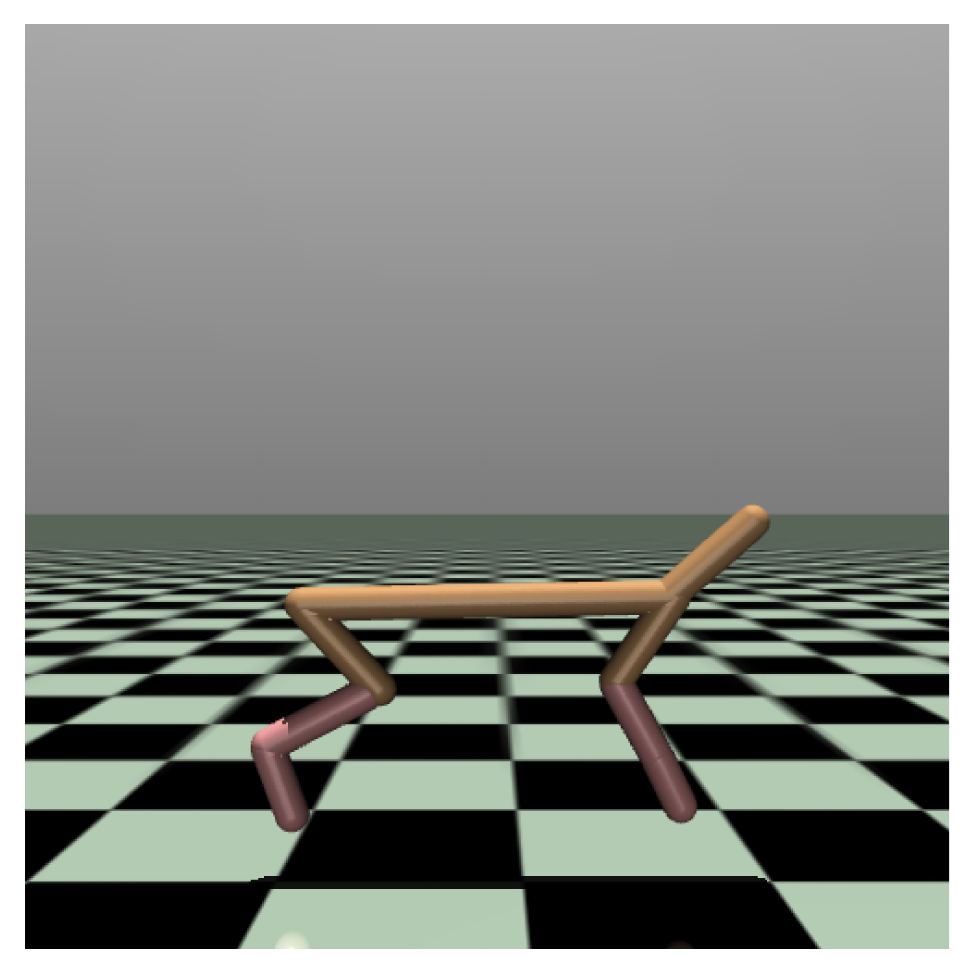}
    \caption{HalfCheetah-v3}
    \label{fig:HalfChetah-v3}
    \end{subfigure}
\caption{Images from Customized Swimmer-v3 and Customized HalfCheetah-v3 environments.}
\end{figure}

\section{Conclusions}

We propose two self-paced RL algorithms that exploit the high-level structural knowledge about long-horizon planning tasks via reward machines. First, we present an intermediate self-paced RL algorithm that uses reward machines to update the policy and value functions of an RL agent. Then, we establish a mapping, called reward-machine-context mapping, that, given a transition in the reward machine, outputs the smallest set of identifier context parameters that determines whether the transition occurs or not. Lastly, we develop a reward-machine-guided, self-paced RL algorithm that builds on the intermediate algorithm and navigates the automated curriculum generation via reward-machine-context mapping. We evaluate the proposed algorithms in three domains. We empirically show that existing approaches fail to accomplish the given long-horizon planning tasks, whereas the proposed algorithms can capture the temporal structure of such tasks. Compared to the intermediate algorithm, the reward-machine-guided, self-paced RL algorithm is more reliable, as it achieves successful completion of the task in every use case, and it also reduces curricula variance by up to four orders of magnitude. 

\paragraph{Limitations.} The limitations come from the self-paced RL algorithm used in the proposed approaches, the assumption of a priori available reward machine, and task knowledge to construct a reward-machine-context mapping: 
\begin{enumerate*}[label=\arabic*)]
    \item Intermediate SPRL and RM-guided SPRL employ a self-paced RL algorithm, SPDL \cite{NEURIPS2020_68a97503}, which uses a parametric family of context distributions to generate a curriculum. Similar to \cite{NEURIPS2020_68a97503,klink2021probabilistic}, we study Gaussian context distributions. Hence, SPDL does not address settings with arbitrary target context distributions. 
    \item We focus on long-horizon planning tasks with a priori available reward machines. The proposed approaches require a reward machine to construct a product contextual MDP, which captures the temporal task structure.
    \item Task knowledge about the connection between the reward machine and the context space enables the design of a reward-machine-context mapping. Unless such knowledge is available, RM-guided SPRL and Intermediate SPRL become equivalent, as the latter do not utilize the mapping.
\end{enumerate*} 

\paragraph{Future Work.} Taking into account the limitations of the proposed approaches, we will study how to infer a reward machine and a reward-machine-context mapping of a domain online to remove the need for a priori available task knowledge. In addition, we will extend RM-guided SPRL to address arbitrary context distributions, which \citet{klink2022curriculum} studies without integrating high-level structural task knowledge.

\balance


\begin{acknowledgements} 
This work is supported by the Office of Naval Research (ONR) under grant number N00014-22-1-2254, and the National Science Foundation (NSF) under grant number 1646522.
\end{acknowledgements}

\bibliography{koprulu_587}

\end{document}